\documentclass[twoside]{article}

% \usepackage{aistats2023}
% If your paper is accepted, change the options for the package
% aistats2023 as follows:
%
\usepackage[accepted]{aistats2023}
%
% This option will print headings for the title of your paper and
% headings for the authors names, plus a copyright note at the end of
% the first column of the first page.

% If you set papersize explicitly, activate the following three lines:
%\special{papersize = 8.5in, 11in}
%\setlength{\pdfpageheight}{11in}
%\setlength{\pdfpagewidth}{8.5in}

% If you use natbib package, activate the following three lines:
\usepackage[round]{natbib}

% If you use BibTeX in apalike style, activate the following line:
% \bibliographystyle{apalike}

%%%%%%%%%%% IMPORTANT %%%%%%%%%%%%%%%
% Add line numbering using lineno package, would not hurt to keep it
% Temporarily added page numbering into running headers, needed to be removed prior to submission

\usepackage{graphicx,amsmath,amsthm,booktabs,subfig,amssymb,multirow,float,array,cuted,url,soul,xcolor,pgfpages,graphicx,caption,color,afterpage,tcolorbox,bm,adjustbox}
\usepackage[ruled,vlined,linesnumbered]{algorithm2e}
\usepackage[breaklinks=true, colorlinks, linkcolor = blue, urlcolor  = blue, citecolor = blue, anchorcolor = blue, bookmarks=true]{hyperref} %pagebackref=true,
\usepackage[switch]{lineno}
% \linenumbers
\allowdisplaybreaks
\newcommand{\R}{\mathbb{R}}
\renewcommand{\L}{\mathcal{L}}

\newcommand{\abs}[1]{\left\lvert#1\right\rvert}
\newcommand{\norm}[1]{\left\lvert\left\lvert#1\right\rvert\right\rvert}
\newcommand{\dsp}{\displaystyle}

\newcolumntype{P}[1]{>{\centering\arraybackslash}p{#1}}

\newtheorem{theorem}{Theorem}

\newtheorem{corollary}{Corollary}
\numberwithin{lemma}{subsection}

\begin{document}

% If your paper is accepted and the title of your paper is very long,
% the style will print as headings an error message. Use the following
% command to supply a shorter title of your paper so that it can be
% used as headings.
%
%\runningtitle{I use this title instead because the last one was very long}

% If your paper is accepted and the number of authors is large, the
% style will print as headings an error message. Use the following
% command to supply a shorter version of the authors names so that
% they can be used as headings (for example, use only the surnames)
%
%\runningauthor{Surname 1, Surname 2, Surname 3, ...., Surname n}

\twocolumn[

\aistatstitle{Orthogonal Gated Recurrent Unit with Neumann-Cayley Transformation}

\aistatsauthor{Edison~Mucllari$^{*}$ \And Vasily~Zadorozhnyy$^{*}$ \And Cole Pospisil \And Duc~Nguyen \And Qiang~Ye}

\aistatsaddress{Department of Mathematics, University of Kentucky, Lexington, Kentucky, USA\\
\texttt{\{edison.mucllari, vasily.zadorozhnyy, Cole.Pospisil, ducnguyen, qye3\}@uky.edu}}
]

\begin{abstract}
    In recent years, using orthogonal matrices has been shown to be a promising approach in improving Recurrent Neural Networks (RNNs) with training, stability, and convergence, particularly, to control gradients. While Gated Recurrent Unit (GRU) and Long Short Term Memory (LSTM) architectures address the vanishing gradient problem by using a variety of gates and memory cells, they are still prone to the exploding gradient problem. In this work, we analyze the gradients in GRU and propose the usage of orthogonal matrices to prevent exploding gradient problems and enhance long-term memory. We study where to use orthogonal matrices and we propose a Neumann series-based Scaled Cayley transformation for training orthogonal matrices in GRU, which we call Neumann-Cayley Orthogonal GRU, or simply NC-GRU. We present detailed experiments of our model on several synthetic and real-world tasks, which show that NC-GRU significantly outperforms GRU as well as several other RNNs.
\end{abstract}

\section{Introduction}\label{introduction}

One of the preferred neural network models for working with sequential data is the Recurrent Neural Network (RNN) ~\citep{10.5555/104279.104293, doi:10.1073/pnas.79.8.2554}. RNNs can efficiently model sequential data through a hidden sequence of states. However, training vanilla RNNs have obstacles~\citep{10.5555/104279.104293, doi:10.1073/pnas.79.8.2554}, one of which is their susceptibility to vanishing and exploding gradients~\citep{Bengio93}. In the case of vanishing gradients, the optimization algorithm faces difficulty continuing to learn due to very small changes in the model parameters. In the case of exploding gradients, the training could overstep local minima, potentially causing instabilities such as divergence or oscillatory behavior. It may also lead to computational overflows.

There have been several works studying how to solve these problems. For example gates have been introduced into the RNN architecture: Long Short-Term Memory (LSTM)~\citep{Hochreiter:1997:LSM:1246443.1246450} and Gated Recurrent Units (GRU)~\citep{https://doi.org/10.48550/arxiv.1406.1078}. They can pass long term information and help to overcome vanishing gradients. In practice, GRU and LSTM models are still prone to the problem of exploding gradients.

More recently, several RNN models have been proposed using unitary or orthogonal matrices for the recurrent weights~\citep{kyle17, DoradoRojas2020OrthogonalLR, vorontsov2017orthogonality, Mhammadi16, 10.5555/3305381.3305560, 10.5555/3045390.3045509, Wisdom16, 8668730, https://doi.org/10.48550/arxiv.1811.04142} along with methods to preserve those properties. The introduction of such weights into RNN models brought new development into the RNN community. One of the main reasons behind this is a theoretical explanation of why the performance is better when using unitary or orthogonal weights~\citep{10.5555/3045390.3045509}. The key step in all of these methods is preserving orthogonal or unitary properties at every iteration of training. There have been several different techniques for updating the recurrent weights to preserve either orthogonal or unitary properties, including for example multiplicative updates~\citep{Wisdom16}, Givens rotations~\citep{10.5555/3305381.3305560}, Householder reflections~\citep{Mhammadi16}, Cayley transforms~\citep{kyle17, https://doi.org/10.48550/arxiv.1811.04142, https://doi.org/10.48550/arxiv.1911.07964,pmlr-v97-lezcano-casado19a}, and other similar ideas that have shown effective usage of orthogonal or unitary matrices~\citep{Saxe14, 10.5555/3045390.3045509, Henaff17, Hyland17, tagare_notes_2011, vorontsov17}. 

In this work, we study the benefits of applying orthogonal matrices to one of the most widely used RNN models, Gated Recurrent Unit (GRU)~\citep{https://doi.org/10.48550/arxiv.1406.1078}, both theoretically and experimentally. We analyze the gradients of the GRU loss and based on this analysis, we propose the usage of orthogonal matrices in several hidden state weights of the model. We introduce a Neumann series-based Scaled Cayley transforms for training the orthogonal weights. Our method utilizes a reliable and stable method of Scaled Cayley transforms which was studied and used in~\citep{kyle17, https://doi.org/10.48550/arxiv.1811.04142} for training orthogonal weights for RNNs. In addition, we propose a Neumann series approximation of matrix inverse inside the Cayley transform. Such substitution not only performs either on the same or even better level as the traditional inverse (see Section \ref{experiments} for experiments), but also decreases computation time which might come of particular assistance when working with larger models. We call our method \emph{Neumann-Cayley Orthogonal Gated Recurrent Unit} or \emph{NC-GRU} for simplicity. Our experiments show that the proposed method is more stable with faster convergence and produces better results on several synthetic and real-world tasks.

The rest of the paper has the following layout. In Section \ref{relatedwork}, we discuss some previous works that are most relevant for this paper. In Section \ref{theory}, we introduce Neumann-Cayley Orthogonal Gated Recurrent Unit (NC-GRU) method together with backpropagation method and gradient analysis of GRU~\citep{https://doi.org/10.48550/arxiv.1406.1078} and NC-GRU. In Section \ref{experiments}, we present experiments of our model on several synthetic and real-world problems: Adding task~\citep{hochreiter1997long}, Copying task~\citep{hochreiter1997long}, Parenthesis task~\citep{8668730, https://doi.org/10.48550/arxiv.1611.09434}, and Denoise task~\citep{8668730} as well as character and word level Penn TreeBank prediction tasks~\citep{marcus-etal-1993-building}. Then, in Section \ref{ablation_study}, we provide several experiments to justify the use of orthogonal matrices, and Neumann series as well as which hidden states benefit from them. Finally, in Section \ref{conclusion}, we summarize our proposed method and contribution.

\section{Related Work}\label{relatedwork}

Many models have been designed to improve classical RNN~\citep{10.5555/104279.104293, doi:10.1073/pnas.79.8.2554} for sequential data. They include, for example, the establishment of gates~\citep{Hochreiter:1997:LSM:1246443.1246450,https://doi.org/10.48550/arxiv.1406.1078}, normalization methods~\citep{ioffe2015batch, CooijmansBLGC17, Wu_2018_ECCV, ulyanov2017instance, NIPS2016_ed265bc9, ba2016layer, NEURIPS2019_2f4fe03d}, and the introduction of unitary and orthogonal matrices into RNN structure~\citep{10.5555/3045390.3045509,10.5555/3305381.3305560,Mhammadi16,vorontsov17, 8668730, DoradoRojas2020OrthogonalLR}, etc. In this section, we will discuss some of the works most relevant to our proposed method.

Unitary RNNs (uRNNs)~\citep{10.5555/3045390.3045509} presented an architecture that learns a unitary weight matrix. The construction of the recurrent weight matrix consists of a composition of diagonal matrices, reflection matrices in the complex domain, and Fourier transform. The uRNN model presented in~\citep{Wisdom16} is based on the constrained optimization over the space of all unitary matrices rather than a product of parameterized matrices. EUNN~\citep{10.5555/3305381.3305560} is another work that utilizes the product of unitary matrices. The recurrent matrix in this architecture is parameterized with products of Givens Rotations. Also, the representation capacity of the unitary space is fully tunable and ranges from a subspace of unitary matrices to the entire unitary space.

Work by~\citep{Mhammadi16} proposed orthogonal RNNs, or simply oRNNs, which involve the application of Householder reflections. Such parameterization of the transition matrix allows efficient training and maintains the orthogonality of the recurrent weights while training. The method introduced in~\citep{vorontsov17}, proposes a weight matrix factorization by bounding the matrix norms. Moreover, it controls the degree of gradient expansion during backpropagation. Besides that, this technique enforces orthogonality as well.
GORU~\citep{8668730} presented a RNN which is an extension of unitary RNN with a gating mechanism. Unitarity is preserved by using the ideas from EURNN~\citep{10.5555/3305381.3305560}. The results compared to GRU~\citep{https://doi.org/10.48550/arxiv.1406.1078} are mixed, depending on the task. More recently, ~\citep{DoradoRojas2020OrthogonalLR} presented an embedding of a linear time - invariant system that contains Laguerre polynomials in the model.

\subsection{Gated Recurrent Unit (GRU)}
In our work, we have studied in depth the Gated Recurrent Unit (GRU) architecture which was proposed in~\citep{https://doi.org/10.48550/arxiv.1406.1078} as an alternative to the well-known LSTM~\citep{Hochreiter:1997:LSM:1246443.1246450} cell. The structure of a GRU cell is
\begin{equation}
    \begin{aligned}\label{model:GRU}
        r_t &= \sigma \left(W_rx_t + U_r h_{t-1} + b_r\right)\\
        u_t &= \sigma \left(W_ux_t + U_u h_{t-1} + b_u\right)\\
        c_t &= \Phi \left(W_cx_t + U_c \left(r_t \odot h_{t-1}\right) + b_c\right)\\
        h_t &= \left(1 - u_t\right) \odot h_{t-1} + u_t \odot c_t
    \end{aligned}
\end{equation}
where $W_r$, $W_u$, and $W_c$ are input weights in $\R^{n \times m}$, $U_r$, $U_u$, and $U_c$ are recurrent weights in $\R^{n \times n}$, and $b_r$, $b_u$ and $b_c$ are the bias parameters in $\R^n$. Here, $m$ represents the dimension of the input data and $n$ represents the dimension of the hidden state. In (\ref{model:GRU}), the activation functions $\sigma$ and $\Phi$ are sigmoid and hyperbolic tangent function ($\tanh$) respectively, and $\odot$ is the Hadamard product. In addition, the initial hidden state $h_0$ is set to zero.

The main difference of GRU from LSTM is the implementation of the long-term memory not as a separate channel but inside the hidden state $h_t$ itself. GRU has a single gate $u_t$ that controls both forget and input gates. For example, if the output of $u_t$ is $1$ then the forget gate is open implying that the input gate is closed. Similarly, if $u_t$ is $0$, the forget gate is closed and input gate is open. This structure allows GRUs to discard random or insignificant information but at the same time grasping the important details.

\subsection{Cayley Transform Orthogonal RNN}\label{scoRNN}
We have implemented and studied the effects of orthogonal matrices inside the GRU cell based on the Cayley transforms~\citep{tagare_notes_2011}. Some initial work to use Cayley transform for orthogonal weights in RNNs was presented in~\citep{kyle17} together with scoRNN model. This model introduced a skew-symmetric matrix $A$ which is used to define an orthogonal matrix $W$ via the Scaled Cayley transform
\begin{equation}
    W = \left(I+A \right)^{-1}\left(I-A \right) D,
\end{equation}
where matrix $D$ is a diagonal matrix of ones and negative ones, which scales the traditional Cayley transform~\citep{tagare_notes_2011}. It is proved in \citep{KAHAN2006335} that the matrix $D$, with a suitable choice on the number of negative ones, can avoid a potential problem of eigenvalue(s) of $A$ being negative one, making matrix $I+A$ non-invertible. The number of negative ones in matrix $D$ can be considered as a tunable hyperparameter. Further, it guarantees that the skew-symmetric matrix $A$ that generates the orthogonal matrix will be bounded.

\citep{kyle17} presents the following process to train scoRNN model using Scaled Cayley transforms:

\begin{align}
    A^{(k+1)} & = A^{(k)} - \lambda \nabla_{A}L\left(U_{sco}\left(A^{(k)}\right)\right)\\
    U_{sco}^{(k+1)} & = \left(I+A^{(k+1)} \right)^{-1}\left(I-A^{(k+1)} \right) D \label{sct:u}
\end{align}
where $\nabla_{A}L\left(U_{sco}\left(A\right)\right)$ is computed using 
\begin{equation}
    \nabla_{A}L\left(U_{sco}\left(A\right)\right) = V^T-V,\, 
\end{equation}
with
\begin{equation}
    V= \left(I+A\right)^{-T}\nabla_{U_{sco}} L\left(U_{sco}\left(A\right)\right) \left(D+U_{sco}^T \right)
\end{equation}
in which $\nabla_{U_{sco}} L\left(U_{sco}\left(A\right)\right)$ is computed via standard backpropogation methods.

Despite the fact that rounding errors may accumulate over a number of repeated matrix multiplications, orthogonality in scoRNN~\citep{kyle17} is maintained to the machine precision. This property helps to achieve significant improvements over other orthogonal/unitary RNNs for long sequences on several benchmark tasks, see the Experiments section in~\citep{kyle17} for more details.

\section{Efficient Orthogonal Gated Recurrent Unit}\label{theory}

We now present an efficient orthogonal GRU. The proofs of all theoretical results presented in this section are given in section \ref{sm:sec}.

\subsection{Gradient Analysis of Hidden States in GRU}
Gradient behavior plays a very important role in model training, convergence, stability, and most importantly performance. However when it comes to backpropagation through time for the GRU model from (\ref{model:GRU}), the gradients of the loss function $\mathcal{L}$ with respect to intermediate hidden states, weights, and biases can be found from the respective gradients of the final hidden state $h_T$, which is simplified to finding the gradient of $h_t$ with respect to $h_{t-1}$ for $t$ between 1 and $T$. Namely, 
\begin{equation}
    \dfrac{\partial {\cal L}}{\partial h_{i}}=\dfrac{\partial {\cal L}}{\partial h_{T}}\prod_{t=i+1}^{T} \dfrac{\partial h_{t}}{\partial h_{t-1}}.
\end{equation}

Thus, to analyze the gradients, we consider the gradient of the hidden state $h_t$ with respect to the hidden state $h_{t-1}$ as well as its upper bound in the following theorem.

\begin{theorem}\label{thm:1}
    Let $h_{t-1}$ and $h_{t}$ be two consecutive hidden states from the GRU model stated in (\ref{model:GRU}). Then
    \begin{equation}
        \norm{\dfrac{\partial h_{t}}{\partial h_{t-1}}}_2 \leq \alpha + \beta\norm{U_c}_2
    \end{equation}
    where
    \begin{equation}
        \begin{aligned}
            \alpha &= \delta_u\left( \max_i\left\{[h_{t-1}]_i\right\} + \max_i\left\{[c_t]_i\right\}\right)\norm{U_u}_2 \\
            &\quad\quad\quad + \max_i\left\{(1-[u_t]_i)\right\}
        \end{aligned}
    \end{equation}
    and
    \begin{equation}
        \begin{aligned}
            \beta &= \max_i\left\{[u_t]_i\right\}\left(\delta_r \norm{U_r}_2 \max_i \left\{[h_{t-1}]_i\right\}\right.\\
            &\quad\quad\quad \left.+ \max_i \left\{[r_t]_i\right\} \right),
        \end{aligned}
    \end{equation}
    with constants $\delta_u$ and $\delta_r$ defined as follows:
    \begin{equation}
        \delta_u = \max_i \left\{\left[u_t\right]_i \left(1-\left[u_t\right]_i\right)\right\}
    \end{equation}
    and
    \begin{equation}
        \delta_r = \max_i \left\{\left[r_t\right]_i \left(1-\left[r_t\right]_i\right)\right\}.
    \end{equation} 
\end{theorem}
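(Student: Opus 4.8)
The plan is to expand $\partial h_t/\partial h_{t-1}$ by the chain rule, exploiting that every nonlinearity in (\ref{model:GRU}) acts entrywise and that the Hadamard product with a fixed vector is multiplication by a diagonal matrix. Writing $\mathrm{diag}(v)$ for the diagonal matrix with diagonal $v$, the last line of (\ref{model:GRU}) contributes a direct term $\mathrm{diag}(1-u_t)$ from the explicit $h_{t-1}$, a term $\mathrm{diag}(c_t-h_{t-1})\,\partial u_t/\partial h_{t-1}$ routed through the gate $u_t$, and a term $\mathrm{diag}(u_t)\,\partial c_t/\partial h_{t-1}$ routed through the candidate state $c_t$. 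Differentiating the two sigmoids and the $\tanh$ via $\sigma'=\sigma(1-\sigma)$ and $\Phi'=1-\Phi^2$ gives $\partial u_t/\partial h_{t-1}=\mathrm{diag}\big(u_t\odot(1-u_t)\big)U_u$, and, since $c_t$ depends on $h_{t-1}$ both explicitly and through $r_t$ (product rule on $r_t\odot h_{t-1}$),
\[
\frac{\partial c_t}{\partial h_{t-1}}=\mathrm{diag}(\Phi')\,U_c\Big(\mathrm{diag}(r_t)+\mathrm{diag}(h_{t-1})\,\mathrm{diag}\big(r_t\odot(1-r_t)\big)\,U_r\Big),
\]
with $\Phi'$ evaluated at the pre-activation of $c_t$. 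Assembling these blocks yields a closed form for $\partial h_t/\partial h_{t-1}$ as a sum of three matrix products.

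Next I would take $\norm{\cdot}_2$ and push it through the triangle inequality and submultiplicativity of the spectral norm. For a diagonal matrix $\norm{\mathrm{diag}(v)}_2=\max_i\abs{v_i}$; in particular $\norm{\mathrm{diag}(1-u_t)}_2=\max_i(1-[u_t]_i)$, $\norm{\mathrm{diag}(u_t)}_2=\max_i[u_t]_i$, $\norm{\mathrm{diag}(u_t\odot(1-u_t))}_2=\delta_u$, $\norm{\mathrm{diag}(r_t\odot(1-r_t))}_2=\delta_r$, and $\norm{\mathrm{diag}(\Phi')}_2\le 1$ since $\tanh'\in(0,1]$. Bounding $\norm{\mathrm{diag}(c_t-h_{t-1})}_2\le\max_i[h_{t-1}]_i+\max_i[c_t]_i$ and $\norm{\mathrm{diag}(r_t)+\mathrm{diag}(h_{t-1})\mathrm{diag}(r_t\odot(1-r_t))U_r}_2\le\max_i[r_t]_i+\delta_r\norm{U_r}_2\max_i[h_{t-1}]_i$, the first two of the three terms collapse into $\alpha$ and the third into $\beta\norm{U_c}_2$, which is exactly the claimed inequality. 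The reason $\norm{U_c}_2$ appears linearly with coefficient $\beta$ is that, in the $c_t$-block, $U_c$ is the only non-diagonal factor on the outside that is not itself controlled by a simple entrywise maximum.

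The delicate points here are organizational rather than conceptual: getting the product rule right for $\partial(r_t\odot h_{t-1})/\partial h_{t-1}$ so that it correctly contributes both the $\mathrm{diag}(r_t)$ piece and the piece through $r_t$, and keeping the diagonal factors in the correct order when applying submultiplicativity. I expect the main obstacle to be making the final regrouping match the stated $\alpha$ and $\beta$ verbatim — in particular, noting the minor abuse of writing $\max_i[h_{t-1}]_i$ and $\max_i[c_t]_i$ for what the triangle inequality on $\mathrm{diag}(c_t-h_{t-1})$ and $\mathrm{diag}(h_{t-1})$ really produces, namely $\max_i\abs{[h_{t-1}]_i}$ and $\max_i\abs{[c_t]_i}$.
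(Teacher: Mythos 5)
Your proposal follows essentially the same route as the paper's proof: compute $\partial u_t/\partial h_{t-1}$, $\partial r_t/\partial h_{t-1}$, and $\partial c_t/\partial h_{t-1}$ as diagonal-times-weight products, assemble $\partial h_t/\partial h_{t-1}$, and then apply the triangle inequality, submultiplicativity, $\norm{\mathrm{diag}(v)}_2=\max_i\abs{v_i}$, and $\Phi'\le 1$ to regroup into $\alpha+\beta\norm{U_c}_2$ (the paper merely keeps $-\mathrm{diag}(h_{t-1})$ and $\mathrm{diag}(c_t)$ as separate terms rather than combining them into $\mathrm{diag}(c_t-h_{t-1})$, yielding the same bound). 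Your remark about $\max_i[h_{t-1}]_i$ versus $\max_i\abs{[h_{t-1}]_i}$ correctly identifies a small notational looseness that the paper's own proof shares.
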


The following corollary provides some simple upper bounds for $\alpha, \beta$ obtained in the above theorem.

\begin{corollary}\label{cor:post_thm_bounds}
    For the hyperbolic tangent activation function in (\ref{model:GRU}) (i.e. $\Phi=\,$\verb|tanh|), we have $\delta_u, \delta_r \le \frac{1}{4}$, $[h_t]_i \le 1$ for any $i$ and $t$ as well as 
    \begin{equation}
        \alpha \le \dfrac{1}{2}\norm{U_u}_2+1 \quad\text{and}\quad \beta\leq \dfrac{1}{4}\norm{U_r}_2+1.
    \end{equation}
\end{corollary}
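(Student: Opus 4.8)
The plan is to derive all four bounds from elementary properties of the sigmoid $\sigma$ and the $\tanh$ activation, and then feed them into the closed forms of $\alpha$ and $\beta$ already supplied by Theorem~\ref{thm:1}.

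First I would bound the constants $\delta_u$ and $\delta_r$. Since $u_t$ and $r_t$ are outputs of $\sigma$, each of their entries lies in $(0,1)$, and the scalar map $x\mapsto x(1-x)$ attains its maximum value $\tfrac14$ on $[0,1]$ at $x=\tfrac12$. Applying this entrywise inside the maxima defining $\delta_u$ and $\delta_r$ gives $\delta_u,\delta_r\le\tfrac14$.

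Next I would establish $|[h_t]_i|\le 1$ for all $i$ and $t$ by induction on $t$, which immediately yields $[h_t]_i\le 1$ and, applied at step $t-1$, also $\max_i\{[h_{t-1}]_i\}\le 1$. The base case is $h_0=0$. For the inductive step, the update $h_t=(1-u_t)\odot h_{t-1}+u_t\odot c_t$ expresses each coordinate of $h_t$ as the convex combination $(1-[u_t]_i)[h_{t-1}]_i+[u_t]_i[c_t]_i$ with weight $[u_t]_i\in(0,1)$; since $c_t=\Phi(\cdot)=\tanh(\cdot)$ forces $|[c_t]_i|\le 1$ and $|[h_{t-1}]_i|\le 1$ by the inductive hypothesis, the triangle inequality gives $|[h_t]_i|\le(1-[u_t]_i)\cdot 1+[u_t]_i\cdot 1=1$. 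The same reasoning shows $\max_i\{[c_t]_i\}\le 1$, since $\tanh$ maps into $(-1,1)$.

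Finally I would substitute these facts into the formulas from Theorem~\ref{thm:1}. For $\alpha$, using $\delta_u\le\tfrac14$, $\max_i\{[h_{t-1}]_i\}\le 1$, $\max_i\{[c_t]_i\}\le 1$, and $\max_i\{1-[u_t]_i\}\le 1$ (again because $[u_t]_i\in(0,1)$), one obtains $\alpha\le\tfrac14(1+1)\norm{U_u}_2+1=\tfrac12\norm{U_u}_2+1$. For $\beta$, using $\max_i\{[u_t]_i\}\le 1$, $\delta_r\le\tfrac14$, $\max_i\{[h_{t-1}]_i\}\le 1$, and $\max_i\{[r_t]_i\}\le 1$, one obtains $\beta\le\tfrac14\norm{U_r}_2+1$. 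There is no serious obstacle in this argument; the only step requiring any care is the induction for $|[h_t]_i|\le 1$, and even that rests on the single observation that a GRU state update is, coordinatewise, a convex combination of the previous state and a $\tanh$ output, both of which lie in $[-1,1]$.
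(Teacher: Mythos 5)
Your proposal follows essentially the same route as the paper's own proof: bounding $\delta_u,\delta_r$ by the maximum of $x(1-x)$ on $[0,1]$, an induction on $t$ using the coordinatewise convex-combination form of the GRU update to get $[h_t]_i\le 1$, and direct substitution into the expressions for $\alpha$ and $\beta$ from Theorem~\ref{thm:1}. The only (harmless) difference is that you track $\abs{[h_t]_i}\le 1$ via the triangle inequality, which is a slight strengthening of the paper's argument since $\tanh$ can produce negative entries of $c_t$, but the structure and conclusions are identical.
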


These bounds may be pessimistic because the gate elements may be expected to be close to 0 or 1. 
As a consequence, the following corollary presents the relationship between the constants $\alpha$ and $\beta$ presented in Theorem \ref{thm:1} when GRU's gates are nearly closed or opened. Below we use a notation $x \lesssim y$ to denote that $x$ is bounded by a quantity that is approximately equal to $y$.

\begin{corollary}\label{cor:1}
    When elements of GRU gates $u_t$ and $r_t$ are nearly either 0 or 1, then constants $\alpha$ and $\beta$ from Theorem \ref{thm:1} satisfy the following inequality:
    \begin{equation}
        \alpha+\beta\lesssim 2.
    \end{equation}
    Moreover if $u_t$ and $r_t$ are nearly either the zero vector or the vector of all ones, then 
    \begin{equation}
        \alpha+\beta\lesssim 1.
    \end{equation}
\end{corollary}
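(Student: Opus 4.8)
The plan is to substitute the case analysis on the gate values directly into the explicit formulas for $\alpha$ and $\beta$ from Theorem~\ref{thm:1} and track which terms survive. First I would recall the structure: each of $\alpha$ and $\beta$ is a sum of two kinds of terms, a ``derivative-of-sigmoid'' term carrying a factor $\delta_u$ or $\delta_r$ times $\|U_u\|_2$ or $\|U_r\|_2$, and a ``gate-value'' term carrying only $\max_i\{1-[u_t]_i\}$ or $\max_i\{[u_t]_i\}\max_i\{[r_t]_i\}$ (times $\|U_c\|_2$, but in the corollary we are looking at $\alpha+\beta$ as coefficients, so effectively the quantity controlling the bound when $\|U_c\|_2$ is moderate). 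The key observation is that when every coordinate of $u_t$ and $r_t$ is near $0$ or $1$, the quantities $[u_t]_i(1-[u_t]_i)$ and $[r_t]_i(1-[r_t]_i)$ are all near $0$, so $\delta_u \approx 0$ and $\delta_r \approx 0$, which kills both sigmoid-derivative terms regardless of the sizes of $\|U_u\|_2$ and $\|U_r\|_2$. That leaves $\alpha \lesssim \max_i\{1-[u_t]_i\} \le 1$ and $\beta \lesssim \max_i\{[u_t]_i\}\max_i\{[r_t]_i\} \le 1$, using Corollary~\ref{cor:post_thm_bounds} to bound $[h_{t-1}]_i$ and $[c_t]_i$ by $1$ where they appear (they only appear inside the already-vanishing terms, so that is just for tidiness). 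Hence $\alpha+\beta \lesssim 2$.

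For the sharper bound, I would refine the case analysis one level further: assume $u_t$ is near the all-zeros vector or near the all-ones vector (so the ``mixed'' case, where some coordinates are near $0$ and others near $1$, is excluded), and similarly for $r_t$. If $u_t \approx \mathbf{0}$ then $\max_i\{1-[u_t]_i\}\approx 1$ but $\max_i\{[u_t]_i\}\approx 0$, so $\alpha \lesssim 1$ and $\beta \lesssim 0$, giving $\alpha+\beta \lesssim 1$. If instead $u_t \approx \mathbf{1}$ then $\max_i\{1-[u_t]_i\}\approx 0$ and $\max_i\{[u_t]_i\}\approx 1$, so $\alpha \lesssim 0$; then $\beta \lesssim \max_i\{[r_t]_i\}$, which is $\approx 0$ if $r_t\approx\mathbf{0}$ and $\approx 1$ if $r_t\approx\mathbf{1}$, in either case $\beta \lesssim 1$, so again $\alpha+\beta\lesssim 1$. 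In all four combinations the total is $\lesssim 1$.

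The only delicate point is making the informal relation $\lesssim$ precise enough that the argument is honest without becoming a full $\varepsilon$--$\delta$ exercise: since ``nearly $0$ or $1$'' is itself an approximate hypothesis, the cleanest route is to say that if each $[u_t]_i$ and $[r_t]_i$ lies within $\varepsilon$ of $\{0,1\}$, then $\delta_u,\delta_r \le \varepsilon(1+\varepsilon)$ and the surviving gate-value maxima are within $\varepsilon$ of $0$ or $1$, so the stated bounds hold up to an additive error of order $\varepsilon(1+\|U_u\|_2+\|U_r\|_2)$, which is exactly what the symbol $\lesssim$ (``bounded by a quantity approximately equal to'') is defined to absorb. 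I expect the main obstacle to be purely expository: deciding how much of this $\varepsilon$ bookkeeping to display versus how much to fold into the $\lesssim$ notation, since the algebra itself is a direct substitution into the Theorem~\ref{thm:1} formulas with no new inequalities required beyond those already supplied by Corollary~\ref{cor:post_thm_bounds}.
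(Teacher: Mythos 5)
Your proposal is correct and follows essentially the same route as the paper: in both, the hypothesis that every entry of $u_t$ and $r_t$ is near $0$ or $1$ forces $\delta_u,\delta_r\approx 0$, leaving $\alpha\lesssim\max_i\{1-[u_t]_i\}\le 1$ and $\beta\lesssim\max_i\{[u_t]_i\}\max_i\{[r_t]_i\}\le 1$ for the first bound, and the sharper bound follows from the same four-case analysis on $u_t,r_t$ being near $\mathbf{0}$ or $\mathbf{1}$. Your closing remark quantifying the hidden error as order $\varepsilon\left(1+\norm{U_u}_2+\norm{U_r}_2\right)$ is a slightly more explicit accounting of what $\lesssim$ absorbs than the paper gives, but it does not change the argument.
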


\subsection{Neumann-Cayley Orthogonal Transformation} \label{sec:backprop}

Based on Theorem \ref{thm:1} and Corollary \ref{cor:1}, we propose the usage of orthogonal weights in the hidden parameters of GRU to obtain a better conditioned gradients. As discussed in section \ref{relatedwork}, there have been different techniques proposed and used to initialize and preserve orthogonal weights while training, for example Givens rotations~\citep{10.5555/3305381.3305560}, Householder reflections~\citep{Mhammadi16} etc. In this work, we implement a version of the Scaled-Cayley transformation method discussed in \ref{scoRNN} with one key difference. The Scaled Cayley transform method requires calculation of the inverse of $I+A^{(k)}$ to update the orthogonal matrix $U^{(k)}$ in (\ref{sct:u}). When it comes to the computation of this inverse, classical numerical methods such as using LU-decomposition or solving the Least Squares problem can be implemented, and these methods work well when the dimension of the matrix is small. However, if the dimension of the matrix is large then these methods are very expensive from both memory and computational time perspectives. Moreover, classical methods might overflow and not converge at all. We propose to solve this possible complication by using the Neumann Series method to approximate the inverse of $I+A^{(k)}$. 

To derive the Neumann series approximation for the inverse of $I+A^{(k)}$ in (\ref{sct:u}) we consider the following:
\begin{align}
    \left(I+A^{(k)}\right)^{-1} & = \left(I+A^{(k-1)}-\delta A^{(k)}\right)^{-1}\label{eq:neumann_series1}\\
    &\hspace{-0.9in}\adjustbox{width=0.9\linewidth}{$=\left(I-\left(I+A^{(k-1)}\right)^{-1}\delta A^{(k)}\right)^{-1}\left(I+A^{(k-1)}\right)^{-1}$}\label{eq:neumann_series2}\\
    &\hspace{-0.9in}\adjustbox{width=0.9\linewidth}{$=\left(\dsp\sum_{i=0}^{\infty}\left(\left(I+A^{(k-1)}\right)^{-1}{\delta A^{(k)}}\right)^i\right)\left(I+A^{(k-1)}\right)^{-1}$}\label{eq:neumann_series3}
\end{align}
where $\delta A^{(k)} := optimizer_A\left(\nabla_A\L={V^{(k)}}^T-V^{(k)}\right)$, here $optimizer_A$ includes a learning rate $\lambda$ inside of it. Note that the equality between Equations (\ref{eq:neumann_series2}) and (\ref{eq:neumann_series3}) relies on the assumption that $\norm{\left(I+A^{(k-1)}\right)^{-1}\delta A^{(k)}}<1$ for some operator norm $\norm{\cdot}$, see~\citep{demmel97} for more details. We have conducted an ablation study that shows empirical evidence that this condition is indeed satisfied, see section \ref{as:norms} for more details.

In our experiments, we have considered the first and the second-order Neumann series approximations, estimating the series in (\ref{eq:neumann_series3}) with two ($i=0,1$) and three ($i=0,1,2$) terms respectively. As expected, the performance of the model is slightly better when using second-order approximation although it comes with a marginal increase in computational time, see \ref{as:inv_vs_neumann} for the ablation study regarding the accuracy and computational time of such approximations. Mathematically speaking, if we are using the second-order approximation, the error is of order $\mathcal{O}\left(\left(\left(I+A^{(k-1)}\right)^{-1}\delta A^{(k)}\right)^{3}\right)$. Even though this error is quite small, there is a chance that the errors from this approximation can accumulate and cause a loss of orthogonality. To avoid this issue we recommend resetting orthogonality by computing the matrix inverse explicitly using a factorization method at the beginning of each epoch. However, it might be necessary to do it more often particularly in the earlier training (e.g. every 100 iterations) due to more fluctuations in the gradients.

\IncMargin{1em}\SetNlSty{text}{}{:}
\begin{algorithm}
    \footnotesize
    \SetAlgoLined
        \textbf{Given:} $D$, $A^{(0)}$, $U^{(0)}$, $\nabla_{U}\L\left(U^{(0)} \left(A^{(0)}\right)\right)$, $optimizer_{A}$
        
        \textbf{Define:} $\tilde{A}^{(0)}:= \left(I+A^{(0)}\right)^{-1}$
        
        \vspace{2pt}
        
        \For{$k=1,2,\ldots$}
        {
            $V^{(k)} := \text{$\tilde{A}^{(k-1)}$}^T \nabla_{U}\L\left(U^{(k-1)} \left( A^{(k-1)}\right)\right)\left(D+{U^{(k-1)}}^T \right)$
            
            $\delta A^{(k)} := optimizer_{A}\left(\nabla_{A}\L={V^{(k)}}^{T}-V^{(k)} \right)$
            
            $A^{(k)} := A^{(k-1)}-\delta A^{(k)}$
            
            $\tilde{A}^{(k)} := \left(I+\tilde{A}^{(k-1)}{\delta A^{(k)}}+\left(\tilde{A}^{(k-1)}{\delta A^{(k)}}\right)^2\right)\tilde{A}^{(k-1)}$
            
            $U^{(k)} := \tilde{A}^{(k)} \left(I-A^{(k)} \right) D$
            
        }
    \caption{Update Rule for Orthogonal Weight $U$} %($*$ - either $r$ or $c$)}
    \label{alg:ncgru}
\end{algorithm}

We present Algorithm \ref{alg:ncgru} that outlines the Neumann-Cayley Orthogonal Transformation method for training weight $A$ as well as updating the corresponding orthogonal weight $U$. It is important to note that during the initialization step, the weight $A^{(0)}$ is defined to be skew-symmetric using the same initialization technique as in~\citep{kyle17} which is based on the idea from ~\citep{Henaff17}. Then, we apply the Cayley transform in $A^{(0)}$ to obtain $U^{(0)}$. Another peculiar detail that we want to point out is that Algorithm \ref{alg:ncgru} includes $optimizer_A$, which is the standard optimizer such as SGD, RMSProp~\citep{tieleman2012lecture}, or Adam~\citep{kingma2014adam}, etc., that takes $\nabla_A\L={V^{(k)}}^T-V^{(k)}$ as an input. Moreover, the skew-symmetric property of the weight $A$ and its gradient are preserved under the such optimizer.

\subsection{Neumann-Cayley Orthogonal GRU (NC-GRU)}\label{ncgru}

Finally, we introduce a Neumann-Cayley Orthogonal GRU (NC-GRU) model that utilizes the proposed Neumann-Cayley Orthogonal Transform.

The structure of a NC-GRU cell is shown below:

\begin{equation}
    \begin{aligned}\label{model:NC-GRU}
        r_t &= \sigma \left(W_rx_t + U_r(A_r) h_{t-1} + b_r\right)\\
        u_t &= \sigma \left(W_ux_t + U_u h_{t-1} + b_u\right)\\
        c_t &= \Phi \left(W_cx_t + U_{c}(A_c) \left(r_t \odot h_{t-1}\right)\right)\\
        h_t &= \left(1 - u_t\right) \odot h_{t-1} + u_t \odot c_t
    \end{aligned}
\end{equation}
here $\sigma$ represents sigmoid function, $\odot$ - Hadamard product, and $\Phi$ - modReLU function defined in~\citep{10.5555/3045390.3045509} as
\begin{equation}\label{eq:moderelu}
    \Phi(x):=\text{modReLU}(x):=\text{sgn}(x)\cdot\text{ReLU}\left(\abs{x} + b \right)
\end{equation}
with $b$ as a trainable bias. 

We derive from most of the experiments that the best performance is achieved using orthogonality in $U_c$ and $U_r$ hidden weights. In addition, we present an ablation study in \ref{as:ortho_in_Uu_Ur} about the usage and performance of orthogonal weights throughout the GRU model. Similar to the GRU Cell, $W_r$, $W_u$, $W_c$, $U_u$, $b_r$, $b_u$, $b$, are trainable parameters as well as $U_r$ and $U_c$ together with their associated weights $A_r$ and $A_c$ respectively. Moreover, all of them except $U_{r}$ with $A_r$ and $U_c$ with $A_c$ are trained using standard backpropagation algorithms such as Stochastic Gradient Descent (SGD), RMSProp~\citep{tieleman2012lecture}, or Adam~\citep{kingma2014adam} in a similar manner as in GRU~\citep{https://doi.org/10.48550/arxiv.1406.1078}, however weights $U_{r}$, $A_r$, $U_c$, and $A_c$ are trained using Algorithm \ref{alg:ncgru}.

As we mentioned before, orthogonal weights lead to a better conditioned gradient, the following Corollary summarizes this result.

\begin{corollary}\label{cor:2}
    Let $h_{t-1}$ and $h_{t}$ be two consecutive hidden states from the NC-GRU model defined in (\ref{model:NC-GRU}). Then $\norm{U_r}_2=\norm{U_c}_2=1$ and if the element of the gates $u_t$ and $r_t$ are nearly either 0 or 1, then the following inequality is satisfied: 
    \begin{equation}
        \norm{\dfrac{\partial h_{t}}{\partial h_{t-1}}}_2 \lesssim 2.
    \end{equation}
    Furthermore, if $u_t$ and $r_t$ are nearly either zero vector or vector of all ones,
    \begin{equation}
        \norm{\dfrac{\partial h_{t}}{\partial h_{t-1}}}_2 \lesssim 1.
    \end{equation}
\end{corollary}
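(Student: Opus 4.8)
The plan is to obtain the statement as a direct specialization of Theorem~\ref{thm:1} combined with Corollary~\ref{cor:1}. The first step is to record the orthogonality that the NC-GRU construction guarantees: in the NC-GRU cell~(\ref{model:NC-GRU}) the recurrent weights $U_r = U_r(A_r)$ and $U_c = U_c(A_c)$ are produced by the (Neumann--)Scaled Cayley transform, hence they are orthogonal matrices, and therefore $\norm{U_r}_2 = \norm{U_c}_2 = 1$.

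Next I would substitute $\norm{U_c}_2 = 1$ into the inequality of Theorem~\ref{thm:1}, which immediately collapses the bound to
\[
    \norm{\dfrac{\partial h_t}{\partial h_{t-1}}}_2 \le \alpha + \beta,
\]
where $\alpha$ and $\beta$ are exactly the quantities defined in Theorem~\ref{thm:1} (with the further simplification $\norm{U_r}_2 = 1$ inside the formula for $\beta$, though this is not needed for what follows). The final step is to invoke Corollary~\ref{cor:1} verbatim: when the entries of $u_t$ and $r_t$ are each nearly $0$ or $1$ one has $\alpha + \beta \lesssim 2$, and when $u_t$ and $r_t$ are each nearly the zero vector or the all-ones vector one has $\alpha + \beta \lesssim 1$. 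Chaining either of these with the displayed inequality yields the two claimed estimates.

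There is no substantive obstacle here beyond a single bookkeeping point, and that point is already absorbed into Corollary~\ref{cor:1}: the term $\delta_u \norm{U_u}_2$ appearing in $\alpha$ (and $\delta_r \norm{U_r}_2$ in $\beta$) is harmless even though $U_u$ carries no orthogonality constraint, because each entry of $u_t$ (resp.\ $r_t$) being near $0$ or $1$ forces $[u_t]_i(1-[u_t]_i)\approx 0$ (resp.\ $[r_t]_i(1-[r_t]_i)\approx 0$), so $\delta_u,\delta_r \approx 0$ and the norm-weighted contributions drop out of the asymptotic estimate. Consequently, once $\norm{U_r}_2 = \norm{U_c}_2 = 1$ is in hand, the corollary follows simply by reading off the relevant cases of Theorem~\ref{thm:1} and Corollary~\ref{cor:1}; the only care required is to keep the $\lesssim$ reasoning consistent with how it was used in Corollary~\ref{cor:1}.
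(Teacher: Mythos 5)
Your proposal is correct and mirrors the paper's own argument: orthogonality of the Cayley-transformed weights $U_r$ and $U_c$ gives $\norm{U_r}_2=\norm{U_c}_2=1$, and the bounds then follow by plugging this into Theorem~\ref{thm:1} and invoking Corollary~\ref{cor:1}. The bookkeeping remark about $\delta_u,\delta_r\approx 0$ neutralizing the $\norm{U_u}_2$ and $\norm{U_r}_2$ terms is exactly the reasoning the paper delegates to Corollary~\ref{cor:1}.
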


\begin{figure*}[!t]
    \centering
    \subfloat[$T=100$]{{\includegraphics[width=0.45\textwidth]{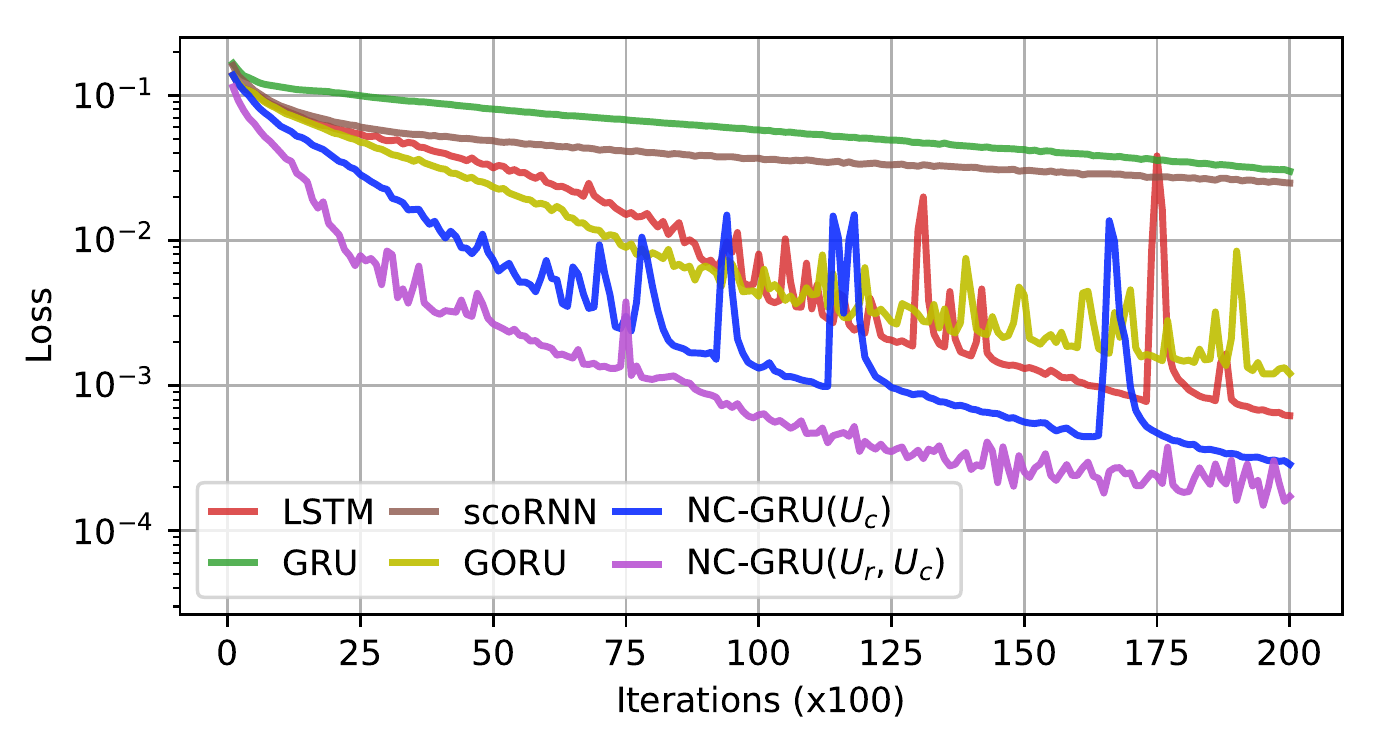} }\label{fig:parenthesis_task100}}
    \qquad
    \subfloat[$T=200$]{{\includegraphics[width=0.45\textwidth]{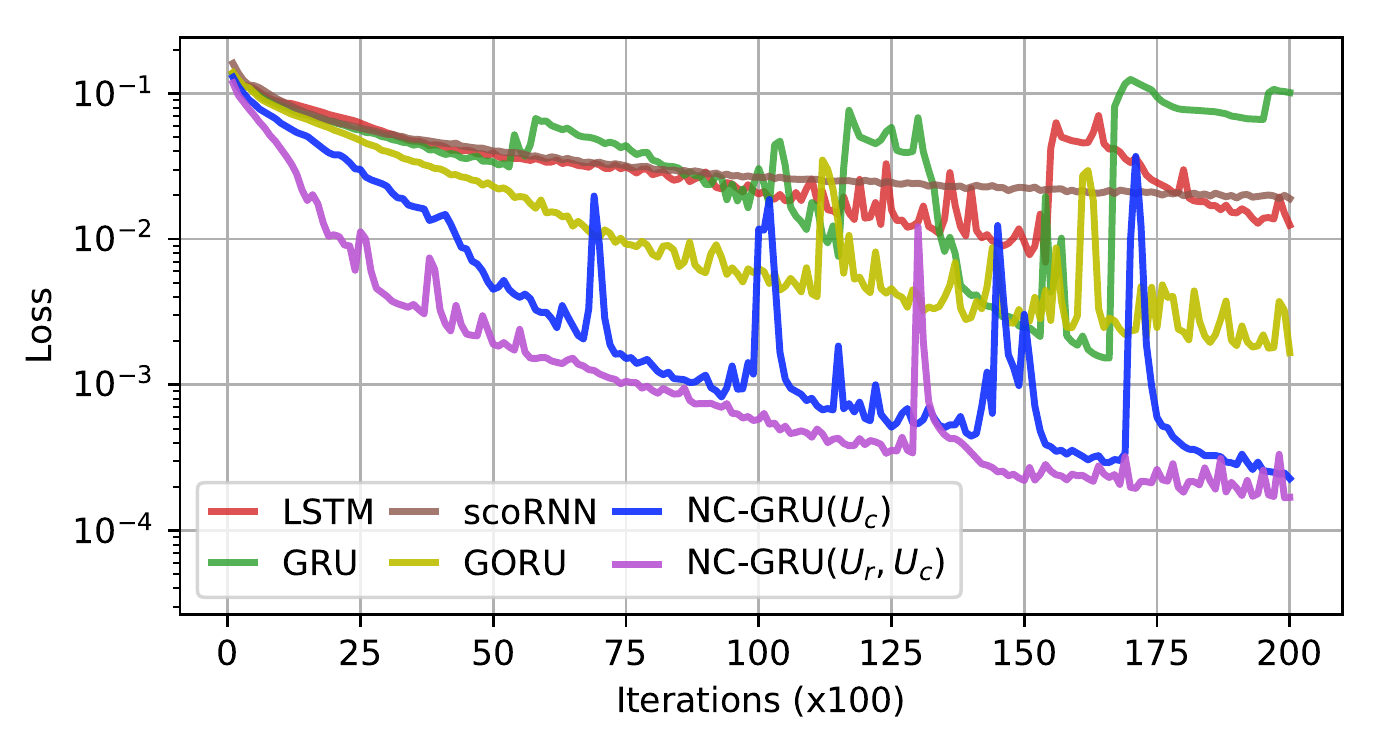} }\label{fig:parenthesis_task200}}
    \caption{\textbf{Parenthesis Task Results}}
    \label{fig:parenthesis_task}
\end{figure*}

\section{Experiments}\label{experiments}
We have performed a variety of experiments to demonstrate the robustness and efficiency of our proposed NC-GRU method. To this end, we apply NC-GRU to 4 commonly used synthetic tasks: Parenthesis, Denoise, Adding, and Copying Tasks. In addition, we have considered non-synthetic experiments, Language Modeling with the character and word level tasks for the Penn Treebank (PTB) dataset~\citep{10.5555/972470.972475}.

All the experiments were trained with equal numbers of trainable parameters also known as parameter-matching architecture, see the implementation details section of each experiment for more details. Codes for these experiments are available online at \href{https://github.com/Edison1994/NC-GRU}{github.com/Edison1994/NC-GRU}.

\subsection{Parenthesis Task}\label{parenthesisproblem}

This experiment derives from the descriptions in~\citep{8668730} and \citep{https://doi.org/10.48550/arxiv.1611.09434}. This task tests the ability of the network to remember the number of unmatched parentheses contained in our input data. The input data consists of 10 pairs of different types of parentheses combined with some noise data in between and it is given as a one-hot encoding vector of length $T$. As stated in~\citep{8668730}, there are not more than $10$ types of parentheses. The output data is given as a one hot-encoding vector too, counting the number of the unpaired parentheses in the corresponding input data. The goal of our model is to forget the noise data and absorb information from the long-term dependencies related to the parentheses. This synthetic data requires the model to develop a memory and to be able to select the most relevant information.

For this experiment, we compared 5 different models/architectures: LSTM~\citep{Hochreiter:1997:LSM:1246443.1246450}, GRU~\citep{https://doi.org/10.48550/arxiv.1406.1078}, scoRNN~\citep{kyle17}, GORU~\citep{8668730}, and NC-GRU (ours).

\emph{Implementation Details:} All of the models consisted of a single layer net with the following hidden dimensions for each model: LSTM~\citep{Hochreiter:1997:LSM:1246443.1246450} - 42, GRU~\citep{https://doi.org/10.48550/arxiv.1406.1078} - 50, scoRNN~\citep{kyle17} - 110, GORU~\citep{8668730} - 64, and NC-GRU - 56. Furthermore, we trained all of the models for 200 epochs with a batch size of 16, and the number of negative ones for the $D$ matrix in scoRNN~\citep{kyle17} and NC-GRU models were 20 and 40, respectively. All models were trained using Adam optimizer~\citep{kingma2014adam} with a learning rate of $10^{-3}$ including $A$ associated weights in scoRNN~\citep{kyle17} and NC-GRU models. We conducted experiments using input length of $T=100$, see Figure \ref{fig:parenthesis_task100}, and $T=200$, see Figure \ref{fig:parenthesis_task200}. 

We present two versions of NC-GRU model, the first one only has orthogonality in $U_c$ weight (NC-GRU($U_c$)) however the second model utilizes orthogonality in both weights $U_r$ and $U_c$ (NC-GRU($U_r,U_c$)). Both models were trained using Neumann series method with a reset at every 50 iterations.

\emph{Results:} We observed and recorded the behavior of the five models on the Parenthesis task when the input length is set to $100$ and $200$. Our results showed that both versions of NC-GRU models outperform GRU~\citep{https://doi.org/10.48550/arxiv.1406.1078}, LSTM~\citep{Hochreiter:1997:LSM:1246443.1246450}, scoRNN~\citep{kyle17}, and GORU~\citep{8668730} models with a significant gap, see Figure \ref{fig:parenthesis_task}. On this task, NC-GRU($U_r,U_c$) model shows a better performance than NC-GRU($U_c$), however both of them perform better than the rest of the compared models. The minimum value of the loss function attained during the training are presented in Table \ref{tab:parenthesis_task}.

\begin{table}[ht]
    \centering
    \caption{\textbf{Parenthesis Task Results:} Minimum attained loss values (the smaller the better). All results are based on our tests. NC-GRU$(U_c)$ denotes the NC-GRU model (\ref{model:NC-GRU}) where Neumann-Cayley method was only applied to the weight $U_c$, similarly NC-GRU$(U_r, U_c)$ represents the NC-GRU model (\ref{model:NC-GRU}) where both $U_r$ and $U_c$ weights were updated using the Neumann-Cayley method.}
    \begin{tabular}{r|P{1.6cm}|P{1.6cm}}
        \toprule
        & \multicolumn{2}{c}{Loss $\times 10^{-3}$ $\downarrow$} \\
        \midrule
        Sequence Length & $T=100$ & $T=200$\\
        \midrule
        LSTM & $0.62$ & $6.93$\\
        GRU & $29.94$ & $1.526$\\
        scoRNN & $24.85$ & $18.957$\\
        GORU & $1.199$ & $1.66$ \\
        NC-GRU$(U_c)$ (\textbf{ours}) & $0.286$ & $0.227$ \\
        NC-GRU$(U_r, U_c)$ (\textbf{ours}) & $0.149$ & $0.167$ \\
        \bottomrule
    \end{tabular}
    
    \label{tab:parenthesis_task}
\end{table}

\subsection{Denoise Task}\label{denoiseproblem}

The Denoise Task~\citep{8668730} is another synthetic problem that requires filtering out the noise out of a noisy sequence. This problem requires the forgetting ability of the network as well as learning long-term dependencies coming from the data~\citep{8668730}. The input sequence of length $T$ contains 10 randomly located data points and the other $T-10$ points are considered noise data. These 10 points are selected from a dictionary $\{a_i\}_{i=0}^{n+1}$, where the first $n$ elements are data points, and the other two are the $``noise"$ and the $``marker"$ respectively. The output data consists of the list of the data points from the input, and it should be outputted as soon as it receives the $``marker"$. The model task to filter out the noise part and output the random 10 data points chosen from the input. 

Similar to the Parenthesis Task, we compared our NC-GRU models to LSTM~\citep{Hochreiter:1997:LSM:1246443.1246450}, GRU~\citep{https://doi.org/10.48550/arxiv.1406.1078}, scoRNN~\citep{kyle17}, and GORU~\citep{8668730} models.

\emph{Implementation Details:} We implemented one NC-GRU cell with a hidden size of 118 and the number of negative ones in the $D$ matrix to 50. The hidden size for the LSTM~\citep{Hochreiter:1997:LSM:1246443.1246450} was 90, GRU~\citep{https://doi.org/10.48550/arxiv.1406.1078} -- 100, scoRNN~\citep{kyle17} -- 200, and GORU~\citep{8668730} -- 128. We implemented Adam optimizer~\citep{kingma2014adam} with learning rate of $10^{-3}$ to train all the aforementioned models including scoRNN~\citep{kyle17} and NC-GRU $A$ weights. We trained all the models for 10,000 iterations with a batch size of 128. Similar to the Parethesis Task \ref{parenthesisproblem}, we implemented the Neumann series method of approximation the $\left(I+A^{(k)}\right)^{-1}$ when training the NC-GRU model with the reset option to be applied every 50 iterations.

\emph{Results:} Based on our experiments, NC-GRU($U_c$) and NC-GRU($U_r, U_c$) models significantly outperformed LSTM~\citep{Hochreiter:1997:LSM:1246443.1246450}, GRU~\citep{https://doi.org/10.48550/arxiv.1406.1078}, scoRNN~\citep{kyle17}, and GORU~\citep{8668730} models on the denoise data, see Figure \ref{fig:denoise_task}. Similar to what we observed from the parenthesis task, usage of two orthogonal weights leads to better results. In addition, Table \ref{tab:denoise_task} provides a comparison of attained minimum loss for each of the models.

\begin{table}[ht]
    \centering
    \caption{\textbf{Denoise Task Results:} Minimum attained loss values (the smaller the better). All results are based on our tests.}
    \begin{tabular}{r|P{1.6cm}|P{1.6cm}}
        \toprule
        & \multicolumn{2}{c}{Loss $\times 10^{-2}$ $\downarrow$} \\
        \midrule
        Sequence Length & $T=200$ & $T=400$\\
        \midrule
        LSTM & $10.08$ & $5.427$\\
        GRU & $9.799$ & $5.2338$\\
        scoRNN & $8.258$ & $3.812$\\
        GORU & $4.453$ & $2.29$ \\
        NC-GRU$(U_c)$ (\textbf{ours}) & $1.639$ & $0.878$ \\
        NC-GRU$(U_c, U_r)$ (\textbf{ours}) & $1.633$ & $0.774$ \\
        \bottomrule
    \end{tabular}
    \label{tab:denoise_task}
\end{table}

\begin{figure*}[t]
    \centering
    \subfloat[$T=200$]{{\includegraphics[width=0.45\textwidth]{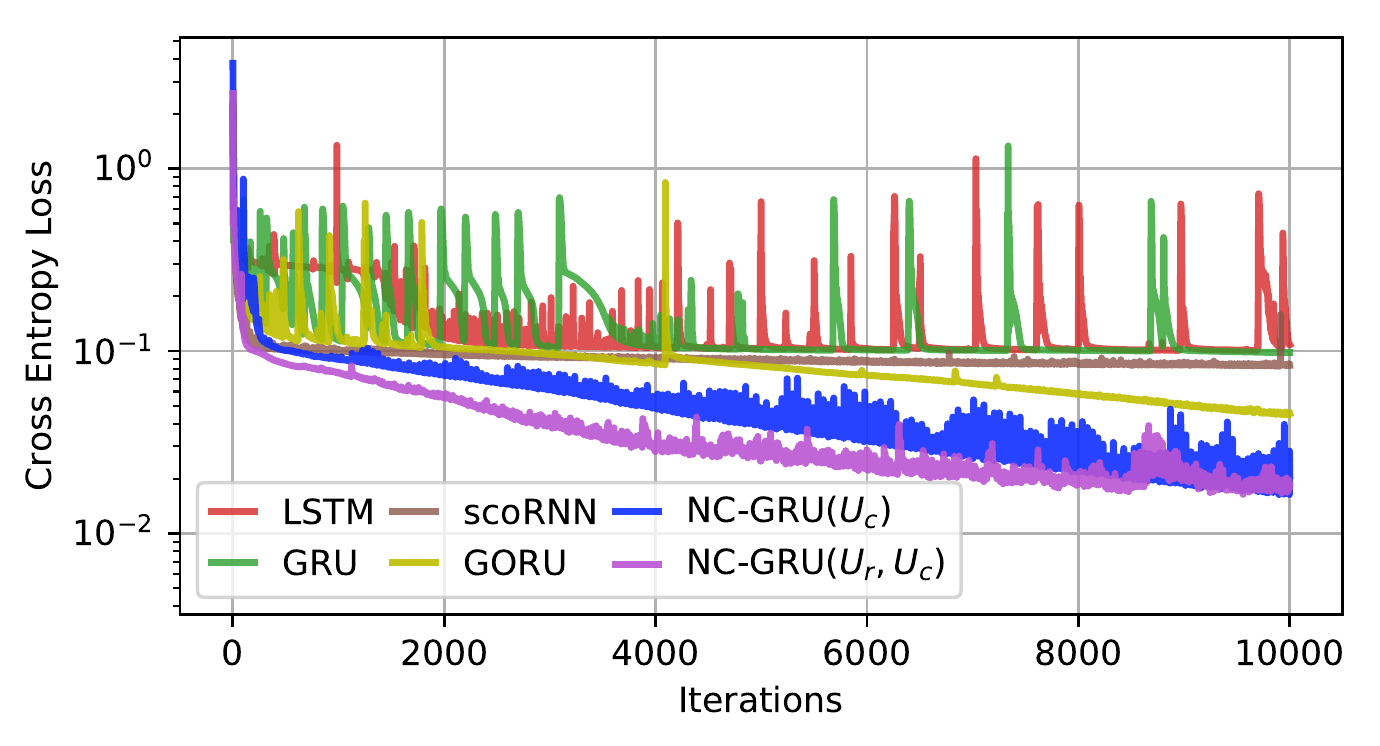} }\label{fig:denoise_task200}}
    \qquad
    \subfloat[$T=400$]{{\includegraphics[width=0.45\textwidth]{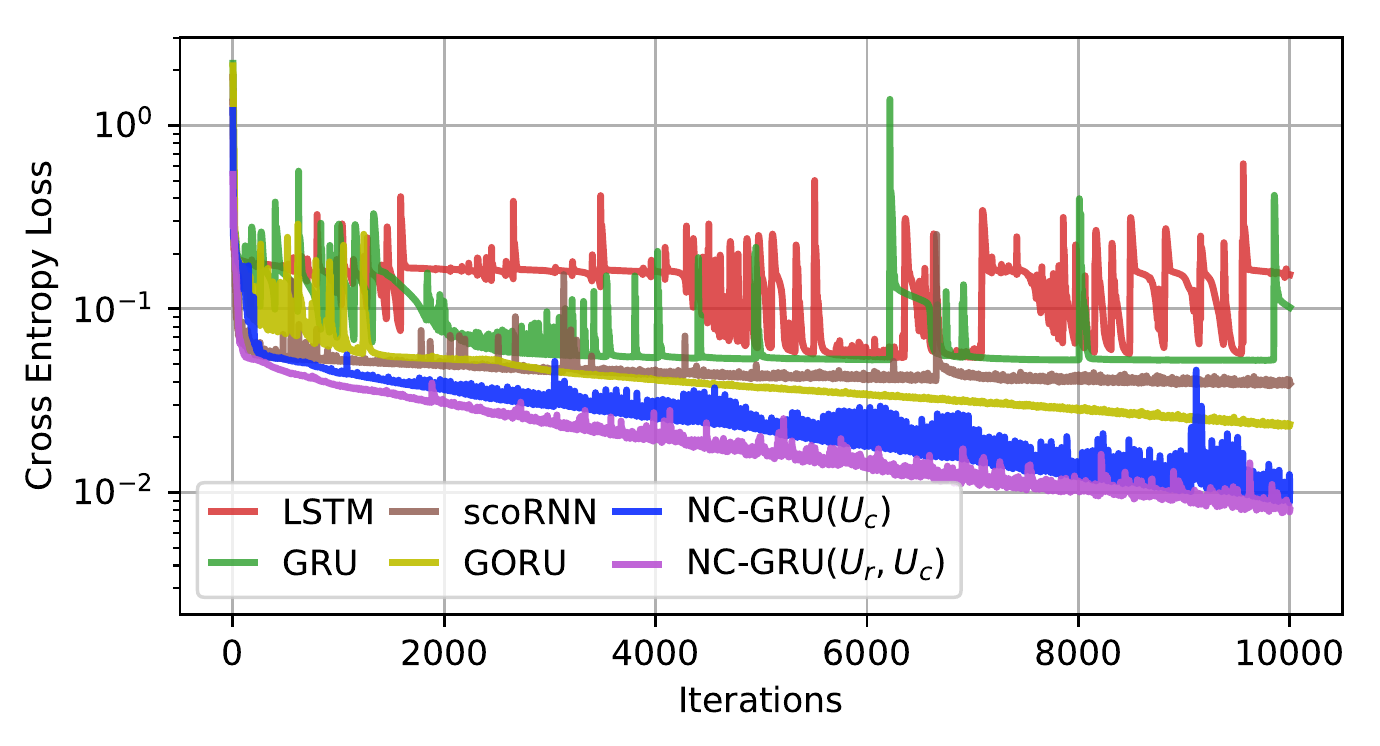} }}
    \caption{\textbf{Denoise Task Results}\label{fig:denoise_task400}}
    \label{fig:denoise_task}
\end{figure*}

\subsection{Adding Problem}\label{addingproblem}

The adding problem is the third synthetic task that we considered and it was proposed in~\citep{hochreiter1997long} for recurrent networks. Our implementation of this problem is a variation of the original problem. The input in the network is a 2-dimensional sequence of length $T$. In the first dimension, we have a sequence of all zeros except for two randomly placed ones, one in the first half of the sequence and one in the second one. In the second dimension, we have a sequence of randomly selected numbers that are chosen uniformly from the interval $[0,1)$. The goal of the adding task is to take the second dimension numbers from the positions that correspond to the ones in the first dimension and then output their sum. The size of the training and testing/evaluation sets are $100,000$ and $10,000$, respectively.

We compared the performance of our NC-GRU models to LSTM~\citep{Hochreiter:1997:LSM:1246443.1246450}, GRU~\citep{https://doi.org/10.48550/arxiv.1406.1078}, scoRNN~\citep{kyle17}, and GORU~\citep{8668730} models.

\emph{Implementation Details:} In this task, we worked with a single layer cell with hidden dimension set to be 68 for LSTM~\citep{Hochreiter:1997:LSM:1246443.1246450}, 70 for GRU~\citep{https://doi.org/10.48550/arxiv.1406.1078}, 190 for scoRNN~\citep{kyle17}, 128 for GORU~\citep{8668730}, and 80 for NC-GRU. The number of negative ones for the $D$ matrix inside scoRNN and NC-GRU was set to 95 and 43, respectively. We used the Adam optimizer~\citep{kingma2014adam} with a learning rate of $10^{-3}$ in all of the experiments, including the training of the $A$ weight in scoRNN~\citep{kyle17} and NC-GRU models. 

All of the models were trained for 10 epochs with a batch size of 50, and evaluation every 100 iterations. Neumann series were applied during the training of NC-GRU models, with reset option implemented every 50 iterations.

\emph{Results:} Figures \ref{fig:adding200} and \ref{fig:adding400} present the performances of all the interested methods on the tests for sequences of length 200 and 400, respectively. In these experiment, our NC-GRU($U_c$) model where the Neumann-Cayley method applied only to the $U_c$ weight showed the best performance out of all the compared models including our NC-GRU($U_r, U_c$) model which produced comparable or marginally better results than others. Minimum attained validation loss values presented in Table \ref{tab:adding_task} for both experiments.

\begin{table}[ht]
    \centering
    \caption{\textbf{Adding Task Results:} Minimum attained loss values (the smaller the better). All results are based on our tests.}\label{tab:adding_task}
    \begin{tabular}{r|P{1.6cm}|P{1.6cm}}
        \toprule
        & \multicolumn{2}{c}{Loss $\times 10^{-5}$ $\downarrow$} \\
        \midrule
        Sequence Length & $T=200$ & $T=400$\\
        \midrule
        LSTM & $2.2829$ & $269.7$\\
        GRU & $1.776$ & $1.2976$\\
        scoRNN & $51.49$ & $16,295.05$\\
        GORU & $2.758$ & $2.3067$ \\
        NC-GRU$(U_c)$ (\textbf{ours}) & $1.022$ & $0.81$ \\
        NC-GRU$(U_c, U_r)$ (\textbf{ours}) & $5.0$ & $4.93$\\
        \bottomrule
    \end{tabular}
\end{table}

\begin{figure*}
    \centering
    \subfloat[$T=200$]{{\includegraphics[width=0.45\textwidth]{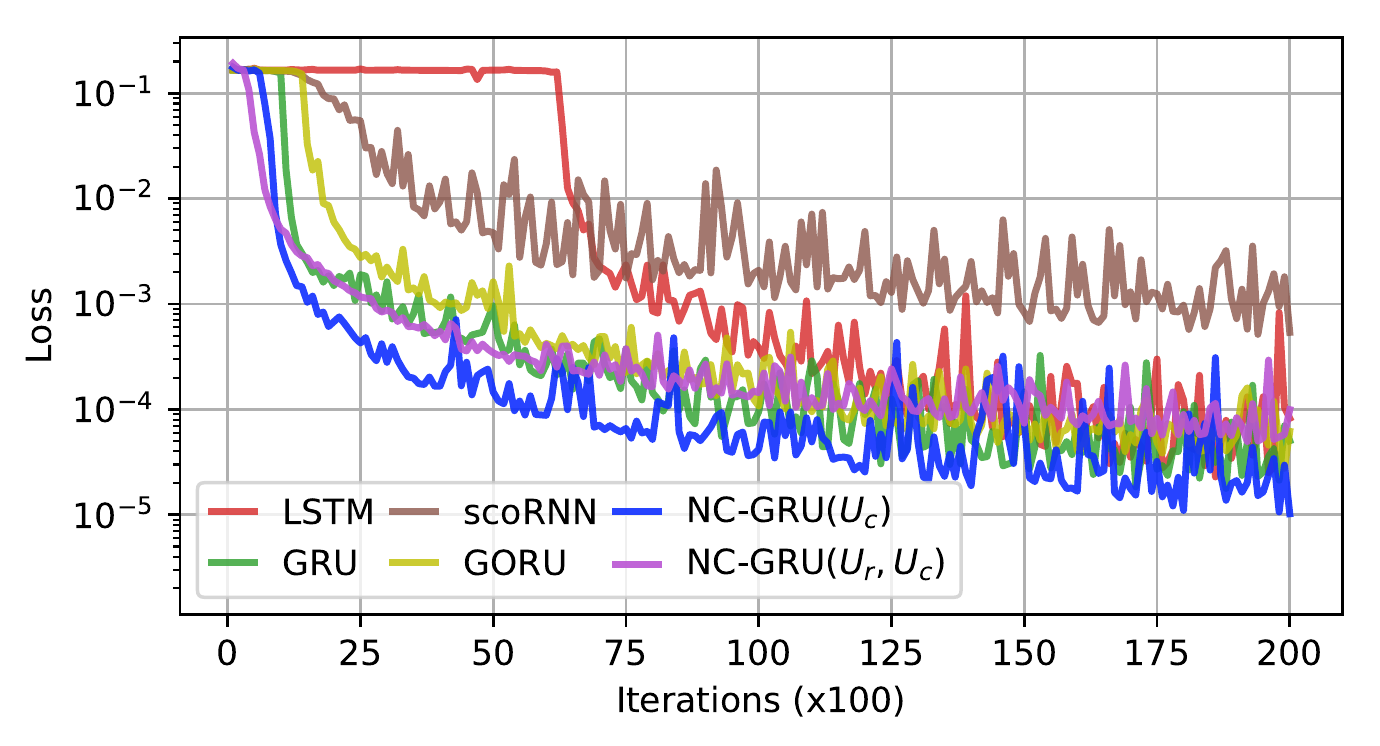} }\label{fig:adding200}}
    \qquad
    \subfloat[$T=400$]{{\includegraphics[width=0.45\textwidth]{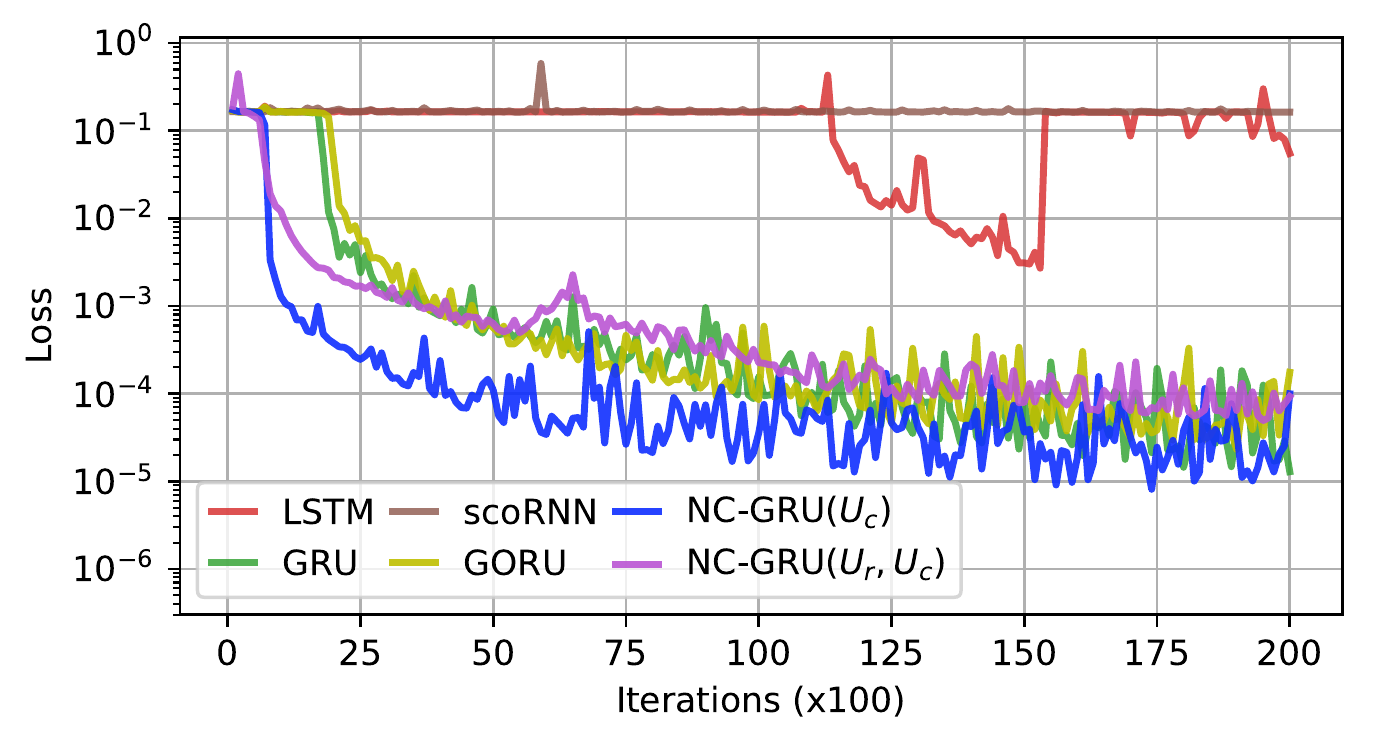} }\label{fig:adding400}}
    \caption{\textbf{Adding Problem Results}\label{fig:adding}}
\end{figure*}

\subsection{Copying Problem}\label{copyingproblem}

The copying problem was proposed in~\citep{hochreiter1997long} as a synthetic task for testing Recurrent Neural Networks. The setup of this problem consists of a string of 10 random digits that are sampled uniformly from the integers 1 through 8 and then fed into the recurrent model. These 10 digits are followed by a string of $T$ zeros and a digit 9 which marks the start of a string of 9 zeros. Therefore, the total length of the fed string is $T+20$. The objective of the task is to output the initial sequence of 10 random digits beginning at the marker location, copying the first ten elements of the sequence in order. For the evaluation of the model, the cross-entropy loss is used with an expected cross-entropy baseline of $\frac{10\log{(8)}}{T+20}$ which represents the random selection of digits 1-8 after the 9.

Similar to the other 3 synthetic experiments, we have compared the performance of our NC-GRU models to LSTM~\citep{Hochreiter:1997:LSM:1246443.1246450}, GRU~\citep{https://doi.org/10.48550/arxiv.1406.1078}, scoRNN~\citep{kyle17}, and GORU~\citep{8668730} models.

\emph{Implementation Details:} We carried out our experiments on a single layer models with hidden sizes for each model to match the number of trainable parameters were 68 for LSTM~\citep{Hochreiter:1997:LSM:1246443.1246450}, 78 for GRU~\citep{https://doi.org/10.48550/arxiv.1406.1078}, 190 for scoRNN~\citep{kyle17}, 100 for GORU~\citep{8668730}, and 96 for NC-GRU. All the models were trained using Adam~\citep{kingma2014adam} optimizer with the learning rate of $10^{-3}$ except $A$ matrix in scoRNN~\citep{kyle17} and NC-GRU models where Adam~\citep{kingma2014adam} optimizer learning rate was set to $10^{-4}$. The number of training iterations and a batch size were set to 10,000 and 50, respectively. 

The number of negative ones for scaling matrix $D$ was set to 95 and 80 for scoRNN~\citep{kyle17} and NC-GRU models respectively. All of the models were evaluated every 50 iterations. While training the model using the NC-GRU models, we applied the Neumann series in a similar way as in the other experiments with the reset option at every 20 iterations.

\emph{Results:} Figures \ref{fig:copying1000} and \ref{fig:copying2000} present the performance of aforementioned models for the Copying Problem with string sizes of $T=1,000$ and $T=2,000$, respectively. Moreover Table \ref{tab:copying_task} shows minimum validation loss values. Both NC-GRU models perform on the same level while outperforming LSTM~\citep{Hochreiter:1997:LSM:1246443.1246450}, GRU~\citep{https://doi.org/10.48550/arxiv.1406.1078}, and GORU~\citep{8668730} models by a noticeable margin. However, for this task scoRNN~\citep{kyle17} performed better than all of the other models.

\begin{table}[ht]
    \centering
    \caption{\textbf{Copying Task Results:} Minimum attained loss values (the smaller the better). All results are based on our tests.}\label{tab:copying_task}
    \begin{tabular}{r|P{1.6cm}|P{1.6cm}}
        \toprule
        & \multicolumn{2}{c}{Loss $\times 10^{-2}$ $\downarrow$} \\
        \midrule
        Sequence Length & $T=1,000$ & $T=2,000$\\
        \midrule
        LSTM & $2.040$ & $1.03$\\
        GRU & $2.039$ & $0.89$\\
        scoRNN & $0.002$ & $0.03$\\
        GORU & $1.488$ & $0.718$ \\
        NC-GRU$(U_c)$ (\textbf{ours}) & $0.904$ & $0.44$ \\
        NC-GRU$(U_c, U_r)$ (\textbf{ours}) & $0.882$ & $0.496$\\
        \bottomrule
    \end{tabular}
    
\end{table}

\begin{figure*}
    \centering
    \subfloat[$T=1,000$]{{\includegraphics[width=0.45\textwidth]{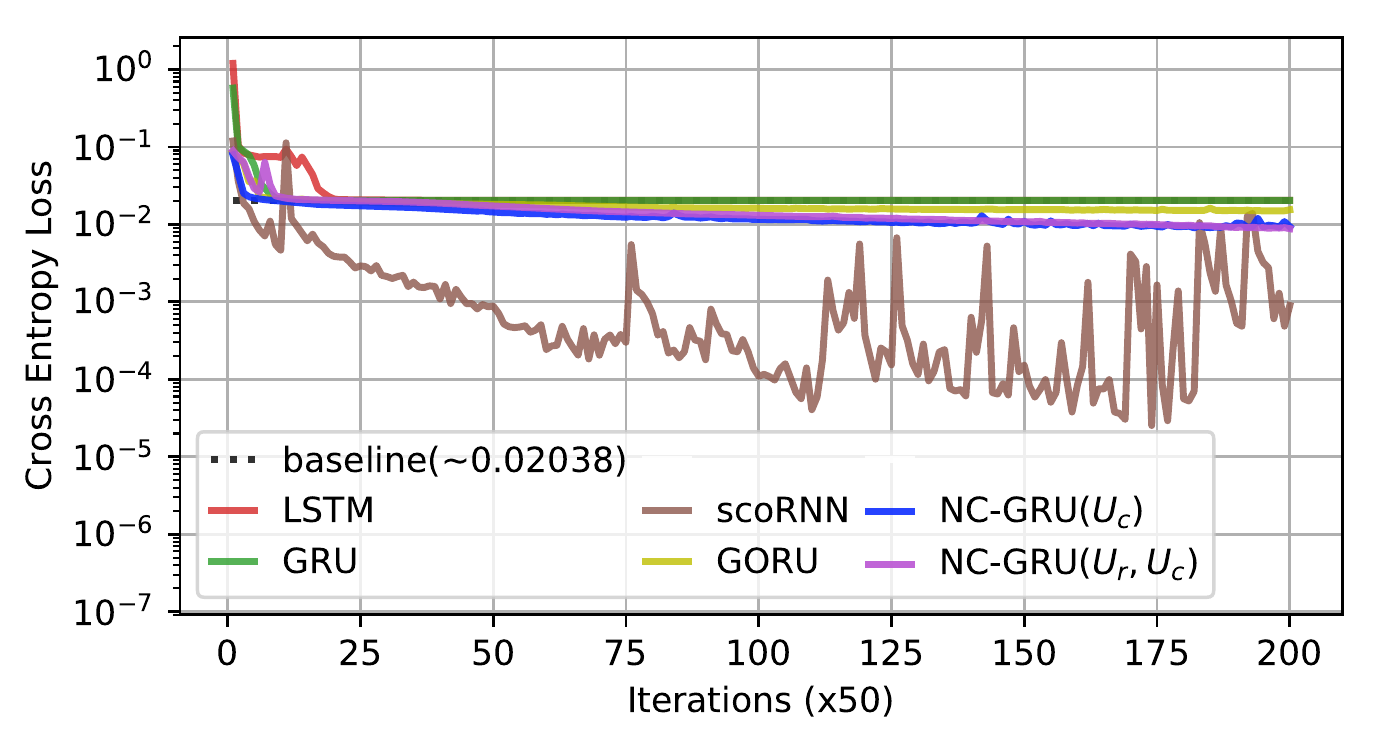} }\label{fig:copying1000}}
    \qquad
    \subfloat[$T=2,000$]{{\includegraphics[width=0.45\textwidth]{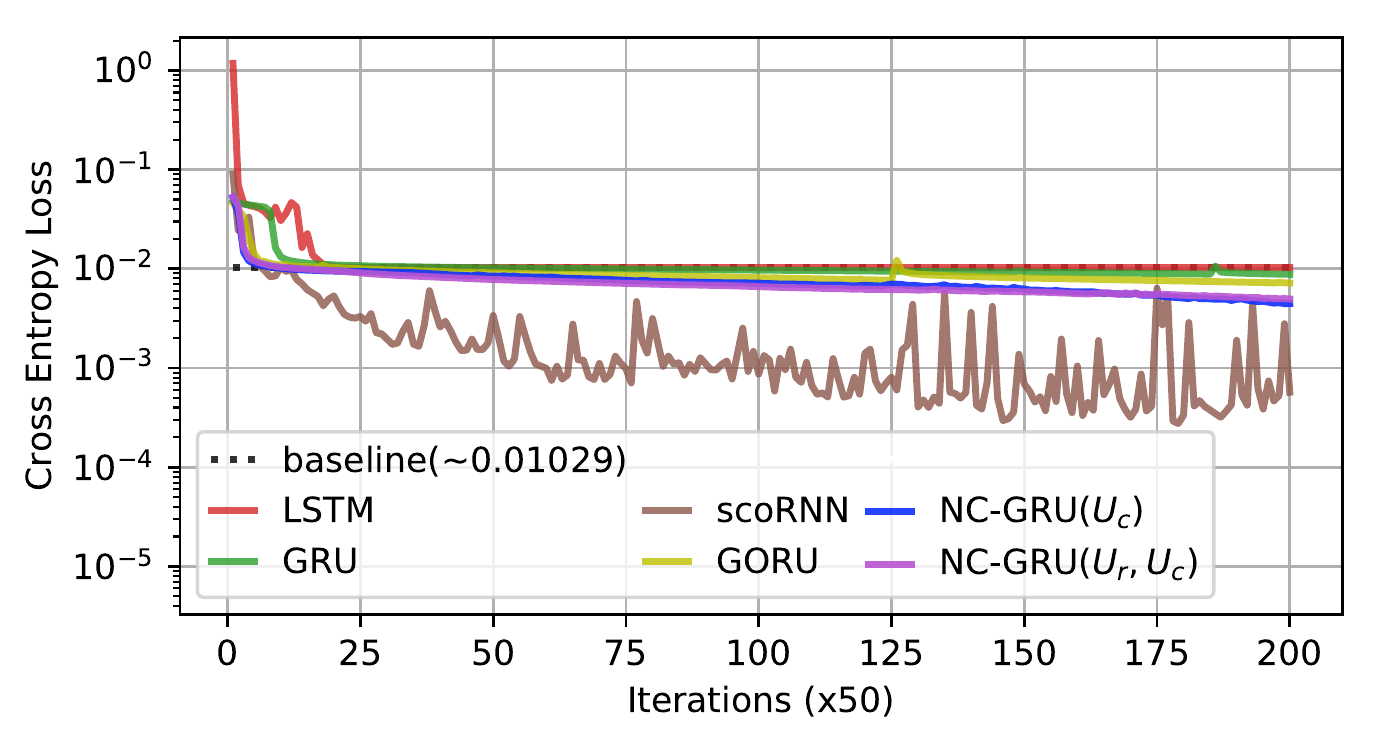} }\label{fig:copying2000}}
    \caption{\textbf{Copying Problem Results}}
    \label{fig:copying}
\end{figure*}

Language modeling is one of many natural language processing tasks. It is the development of probabilistic models that are capable of predicting the next character or word in a sequence using information that has preceded it. We have conducted two experiments on the language modeling with the character and word level tasks for the Penn Treebank (PTB) dataset~\citep{10.5555/972470.972475} both of which were based on experiments and codes from the AWD-LSTM model~\citep{merity2018regularizing}. 

\subsection{Character Level Penn Treebank}\label{cptb}
For this task, models were tested on their suitability for language modeling tasks using the character level Penn Treebank dataset~\citep{marcus-etal-1993-building}. This dataset is a collection of English-language Wall Street Journal articles. The dataset consists of a vocabulary of 10,000 words with other words replaced as $<unk>$, resulting in approximately 6 million characters that are divided into 5.1 million, 400 thousand, and 450 thousand character sets for training, validation, and testing, respectively with a character alphabet size of 50. The goal of the character-level Language Modeling task is to predict the next character given the preceding sequence of characters.

We compared the performance of our NC-GRU model to LSTM~\citep{Hochreiter:1997:LSM:1246443.1246450}, GRU~\citep{https://doi.org/10.48550/arxiv.1406.1078}, and scoRNN~\citep{kyle17} models.

\emph{Implementation Details:} 
For this task, we have considered a three layer models, where hidden dimensions for NC-GRU($U_c$) model were set to (430, 1000, 430). All the dropout coefficient were set to $0.15$. The learning rate of $5\times10^{-4}$ together with the Adam~\citep{kingma2014adam} optimizer were used to train whole model including the matrix $A$. The number of negative ones for matrix $D$ for every layer was set to $50$.

LSTM~\citep{Hochreiter:1997:LSM:1246443.1246450}, GRU~\citep{https://doi.org/10.48550/arxiv.1406.1078}, and scoRNN~\citep{kyle17} model were trained in the similar manner with hidden dimensions being (350, 880, 350), (415, 950, 415), and (500, 2000, 500), respectively. Batch size for all the experiments was set to be $32$ and the Back Propagation Through Time (bptt) window of 100 was used for all the models.

\emph{Results:} We evaluate all models using the bits-per-character (bpc) metric. As shown in Table \ref{tab:ptbchar_task}, NC-GRU showed a better performance than scoRNN, GRU, and even LSTM models.

\begin{table}[ht]
    \centering
    \caption{\textbf{Character Level PTB Results:} Evaluated bits-per-character (bpc) for every model (the smaller the better). All results are based on our tests.}\label{tab:ptbchar_task}
    \begin{tabular}{r|P{3cm}}
        \toprule
        & bpc $\downarrow$ \\
        \midrule
        LSTM & $1.4$ \\
        GRU & $1.455$ \\
        scoRNN & $1.6$ \\
        NC-GRU($U_c$) (\textbf{ours}) & $1.385$ \\
        \bottomrule
    \end{tabular}
\end{table}

\subsection{Word Level Penn Treebank}\label{wptb}
Finally, we also tested our proposed Neumann-Cayley method on the word level Penn Treebank dataset ~\citep{10.5555/972470.972475}. The dataset takes the same underlying corpus as the character level task, but with tokens representing words instead of characters. This results in a smaller dataset with a larger vocabulary size, with 888 thousand, 70 thousand, and 79 thousand words as training, validation, and testing sets, and a vocabulary of 10 thousand words.

\emph{Implementation Details:}
We trained NC-GRU with three layers with dimensions (400, 1150, 400). We have a learning rate set to $5\times 10^{-4}$ for both the $A$ matrix and the rest of the model, optimized using Adam~\citep{kingma2014adam}. The dropout after the NC-GRU cell has a coefficient of 0.4, the embedding layer dropout has a coefficient of 0.4, and the output dropout has a coefficient of 0.25. The number of negative ones for matrix $D$ for each layer was 50.

\emph{Results:} Results were evaluated using the perplexity (PPL) metric and are shown in table \ref{tab:ptbw_task}. We show improved performance over both baseline GRU and LSTM models.

\begin{table}[ht]
    \centering
    \caption{\textbf{Word Level PTB Results:} Evaluated perplexity (PPL) for every model (the smaller the better).  $^*$ - result obtained from our experiments; $^{\dagger}$ - result quoted from~\citep{https://doi.org/10.48550/arxiv.1803.01271}}
    \label{tab:ptbw_task}
    \begin{tabular}{r|P{3cm}}
        \toprule
        & PPL $\downarrow$ \\
        \midrule
        LSTM & $78.93^{\dagger}$ \\
        GRU & $92.48^{\dagger}$ ($80.73^{*}$) \\
        scoRNN & $123.90^{*}$ \\
        NC-GRU($U_c$) (\textbf{ours}) & $77.00$ \\
        \bottomrule
    \end{tabular}
\end{table}

\section{Ablation Studies}\label{ablation_study}

In this section, we consider several ablation studies that help us justify the use of Neumann series, orthogonal matrices, as well as scaled-Cayley transforms.

\subsection{Neumann Series Method vs Inverse}\label{as:inv_vs_neumann}
In this experiment we study the sharpness of approximating $\left(I+A^{(k-1)}\right)^{-1}$ with Neumann series in the NC-GRU($U_c$) model on the Parenthesis task, see \ref{parenthesisproblem} for implementation details and description of NC-GRU($U_c$) model. We consider using Neumann series approximation of order 1, 2, and 3 and compare them to the method of Least Squares for taking a matrix inverse which is one of the widely used methods from Deep Learning libraries such as TensorFlow, PyTorch, and NumPy.

\begin{figure}[ht]
    \centering 
    \includegraphics[width=0.45\textwidth]{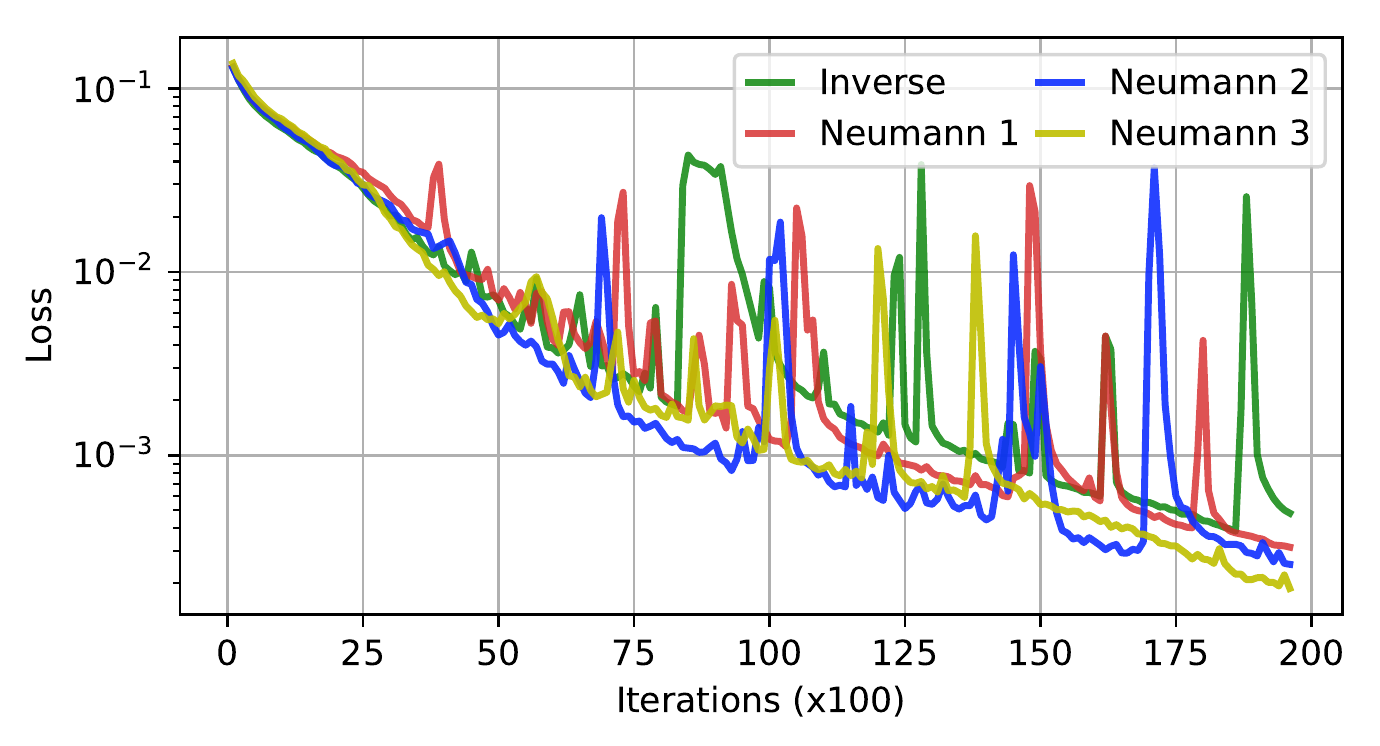}
    \caption{\textbf{Inverse vs Neumann series:} In this graphs, \texttt{Inverse} represents NC-GRU($U_c$) model with $\left(I+A^{(k-1)}\right)^{-1}$ computed using the Least Squares method and \texttt{Neumann i} represents NC-GRU($U_c$) model with $\left(I+A^{(k-1)}\right)^{-1}$ computed using the Neumann series method of order $i$.}
    \label{fig:inv_vs_neumann}
\end{figure}

Our experiments showed that the Neumann series approximation method achieves better results in comparison to the classical Least Squares method for finding matrix inverse. Particularly, Figure \ref{fig:inv_vs_neumann} shows that the Neumann series method of order 2 performs marginally better than order 1 and order 3 Neumann series methods and significantly better than the Least Squares method. 

\begin{figure}[ht]
    \centering 
    \includegraphics[width=0.45\textwidth]{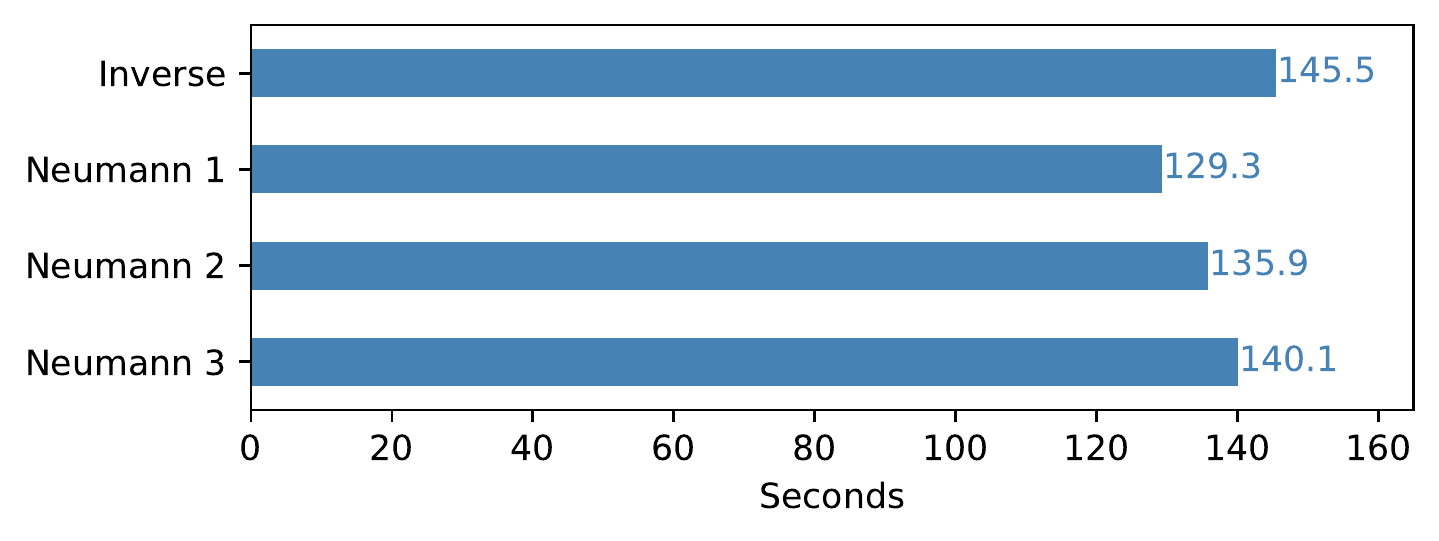}
    \caption{\textbf{Computational time comparisons:} The amount of time (in seconds) it takes to train one epoch of NC-GRU($U_c$) model on Character Level PTB dataset using a single NVIDIA$^{\circledR}$ Tesla$^{\circledR}$ V100 GPU.}
    \label{fig:inv_vs_neumann_time_ptb}
\end{figure}

Additionally, we have compared the time it takes to train our model using mentioned methods. Figure \ref{fig:inv_vs_neumann_time_ptb} shows the time, it takes to train one epoch of NC-GRU($U_c$) model on a Character Level PTB dataset on a single NVIDIA$^{\circledR}$ Tesla$^{\circledR}$ V100 GPU using Inverse (Least Square), Neumann 1, Neumann 2, and Neumann 3 methods.

The observed behavior appears to be quite general and we have conducted all of the experiments in section \ref{experiments} using the second order Neumann series method.

\subsection{Orthogonality in the Hidden Weights of GRU}\label{as:ortho_in_Uu_Ur}
For our second ablation study, we have studied the effect of Neumann-Cayley transformation orthogonal weights applied to the hidden units inside the GRU cell (\ref{model:GRU}). We have considered three models. The first model only had $U_c$ weight replaced with orthogonal weight preserved by the Neumann-Cayley method, we previously called such a model NC-GRU($U_c$). The second model, NC-GRU($U_r, U_c$) had two weights $U_c$ and $U_r$ replaced with Neumann-Cayley transformation orthogonal weights, and finally, the third model, NC-GRU($U_r, U_u, U_c$) had all three weights $U_r$, $U_u$, and $U_c$ replaced.

The results are shown in Figure \ref{fig:one_vs_all}. We see that implementing one or two orthogonal weight models, NC-GRU($U_c$) or NC-GRU($U_r, U_c$), would have the most benefits, while the three orthogonal weight model NC-GRU($U_r, U_u, U_c$) does not perform as well. This can also be seen in experiments in section \ref{experiments} where both one and two orthogonal weight models are used.

\begin{figure}[ht]
    \centering
    \includegraphics[width=0.45\textwidth]{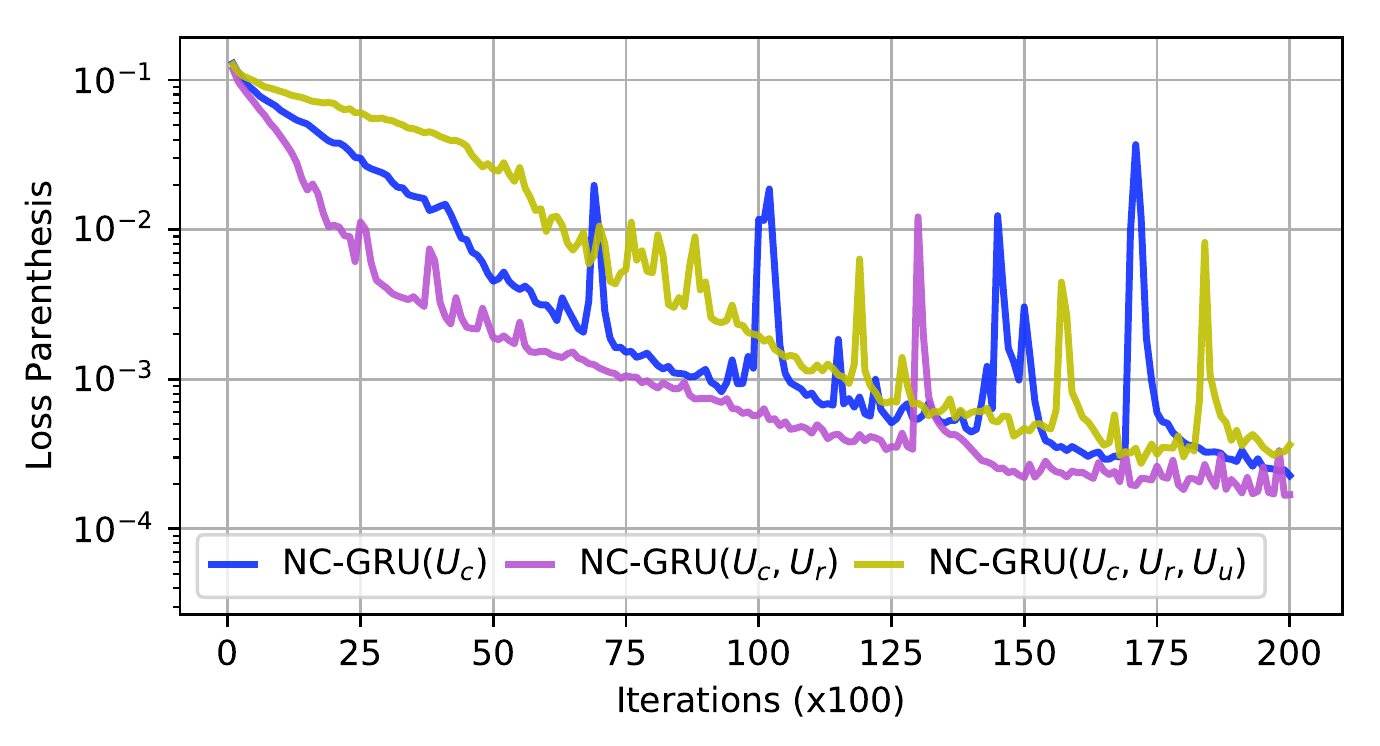}
    \caption{\textbf{Orthogonal matrices for different combinations of hidden weights:} NC-GRU one hidden parameter orthogonal vs three hidden parameters orthogonal}
    \label{fig:one_vs_all}
\end{figure}

\subsection{Necessary Condition for the Neumann Series Method}\label{as:norms}
As it was mentioned in \ref{sec:backprop}, the assumption that allowed us to use the Neumann series is $\norm{\left(I+A^{(k-1)}\right)^{-1}\delta A^{(k)}}<1$ where $\norm{\cdot}$ satisfies $\norm{AB}\leq \norm{A}\norm{B}$ for some matrices $A$ and $B$ of appropriate dimensions.

\begin{figure}[ht]
    \centering 
    \includegraphics[width=0.45\textwidth]{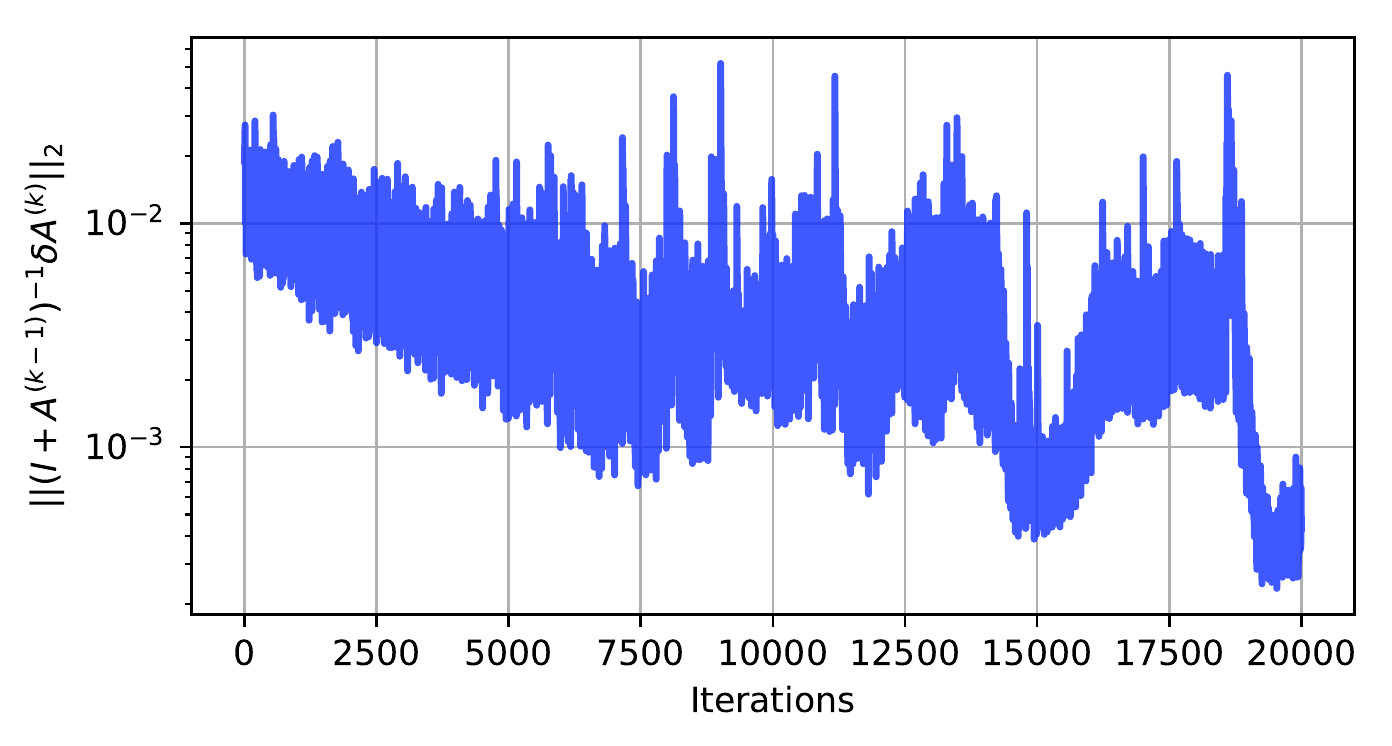}
    \caption{\textbf{Norm condition for the Neumann approximation:} Values of $\norm{\left(I+A^{(k-1)}\right)^{-1}\delta A^{(k)}}_2$ using NC-GRU($U_c$) model with second-order Neumann series approximation method.}
    \label{fig:neumann_norms}
\end{figure}

For this experiment, we chose the Spectral Norm, i.e. $L_2$-norm, and recorded values $\norm{\left(I+A^{(k-1)}\right)^{-1}\delta A^{(k)}}_2$ during training the NC-GRU($U_c$) model on the Parenthesis task with the second-order Neumann series approximation. We observed that values of the norm were changing during the training however they are well below 1 and satisfy the necessary condition as depicted in Figure \ref{fig:neumann_norms}.

\section{Conclusion}\label{conclusion}
In this paper, we have presented a thorough analysis of the Gated Recurrent Unit (GRU) model's gradients. Based on this analysis, we introduced the Neumann-Cayley Gated Recurrent Unit model, which we called NC-GRU. Our model incorporates orthogonal weights in the hidden states of the GRU model which are trained using a newly proposed method of Neumann-Cayley transformation for maintaining the desired orthogonality in those weights. We have conducted a series of experiments that demonstrate the superiority of our proposed method outperforming GRU, LSTM, scoRNN, and GORU moduls on different synthetic and real-world tasks. Moreover, we conducted several ablation studies that empirically confirmed our theoretical results.

\section{Acknowledgment}
We would like to thank the University of Kentucky Center for Computational Sciences and Information Technology Services Research Computing for their support and use of the Lipscomb Compute Cluster and associated research computing resources. 
This research was supported in part by NSF under grants DMS-1821144, DMS-2053284 and DMS-2151802, and University of Kentucky Start-up fund.

\bibliographystyle{apalike}
\bibliography{bibliography}

\onecolumn
% \aistatstitle{Orthogonal Gated Recurrent Unit with Neumann-Cayley Transformation \\ }

\section{Supplementary Materials}\label{sm:sec}

In this section, we present the proofs for the theorem and corollaries stated in section \ref{theory}. 

\setcounter{theorem}{0}
\begin{theorem}
    Let $h_{t-1}$ and $h_{t}$ be two consecutive hidden states from the GRU model stated in (\ref{model:GRU}). Then
    \begin{equation}
        \norm{\dfrac{\partial h_{t}}{\partial h_{t-1}}}_2 \leq \alpha + \beta\norm{U_c}_2
    \end{equation}
    where
    \begin{equation}
        \begin{aligned}
            \alpha &= \delta_u\left( \max_i\left\{[h_{t-1}]_i\right\} + \max_i\left\{[c_t]_i\right\}\right)\norm{U_u}_2 + \max_i\left\{(1-[u_t]_i)\right\}
        \end{aligned}
    \end{equation}
    and
    \begin{equation}
        \begin{aligned}
            \beta &= \max_i\left\{[u_t]_i\right\}\left(\delta_r \norm{U_r}_2 \max_i \left\{[h_{t-1}]_i\right\}+ \max_i \left\{[r_t]_i\right\} \right),
        \end{aligned}
    \end{equation}
    with constants $\delta_u$ and $\delta_r$ be as follows:
    \begin{equation}
        \delta_u = \max_i \left\{\left[u_t\right]_i \left(1-\left[u_t\right]_i\right)\right\}
    \end{equation}
    and
    \begin{equation}
        \delta_r = \max_i \left\{\left[r_t\right]_i \left(1-\left[r_t\right]_i\right)\right\}.
    \end{equation}
\end{theorem}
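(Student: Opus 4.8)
The plan is to compute the Jacobian $\partial h_t/\partial h_{t-1}$ in closed form by the chain and product rules, and then bound its spectral norm term by term, exploiting that the $2$-norm of a diagonal matrix is the largest absolute value among its entries while the recurrent matrices $U_u,U_r,U_c$ are handled by submultiplicativity.

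First I would differentiate the two gates. Using $\sigma'(z)=\sigma(z)(1-\sigma(z))$, we get $\partial u_t/\partial h_{t-1}=\text{diag}(u_t\odot(1-u_t))\,U_u$ and $\partial r_t/\partial h_{t-1}=\text{diag}(r_t\odot(1-r_t))\,U_r$. Applying the product rule to $r_t\odot h_{t-1}$ inside $c_t$ then gives
\[
\frac{\partial c_t}{\partial h_{t-1}}=\text{diag}(\Phi')\,U_c\big(\text{diag}(r_t)+\text{diag}(h_{t-1})\,\text{diag}(r_t\odot(1-r_t))\,U_r\big),
\]
with $\Phi'$ evaluated at the $c_t$ pre-activation. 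Finally, differentiating $h_t=h_{t-1}-u_t\odot h_{t-1}+u_t\odot c_t$ and absorbing the $I-\text{diag}(u_t)$ coming from the first two summands yields
\[
\frac{\partial h_t}{\partial h_{t-1}}=\text{diag}(1-u_t)+\big(\text{diag}(c_t)-\text{diag}(h_{t-1})\big)\frac{\partial u_t}{\partial h_{t-1}}+\text{diag}(u_t)\frac{\partial c_t}{\partial h_{t-1}}.
\]

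Next I would take $\norm{\cdot}_2$ of this identity, use the triangle inequality and $\norm{AB}_2\le\norm{A}_2\norm{B}_2$, and evaluate the diagonal factors: $\norm{\text{diag}(1-u_t)}_2=\max_i(1-[u_t]_i)$, $\norm{\text{diag}(u_t\odot(1-u_t))}_2=\delta_u$, $\norm{\text{diag}(r_t\odot(1-r_t))}_2=\delta_r$, $\norm{\text{diag}(u_t)}_2=\max_i[u_t]_i$, $\norm{\text{diag}(r_t)}_2=\max_i[r_t]_i$, and $\norm{\text{diag}(c_t)-\text{diag}(h_{t-1})}_2\le\max_i[c_t]_i+\max_i[h_{t-1}]_i$. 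Since $\Phi=\tanh$ we have $\norm{\text{diag}(\Phi')}_2\le1$. Collecting the resulting bound into the terms that do not carry $\norm{U_c}_2$ (which form $\alpha$) and the one that does (which is $\beta\norm{U_c}_2$) produces exactly the claimed inequality.

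The main obstacle is the matrix-calculus bookkeeping: keeping track of which factors are diagonal matrices — so that their $2$-norms collapse to coordinatewise maxima — versus the full recurrent matrices $U_u,U_r,U_c$, and correctly handling the product-rule terms $\text{diag}(h_{t-1})\,\partial u_t/\partial h_{t-1}$ and $\text{diag}(h_{t-1})\,\partial r_t/\partial h_{t-1}$. A minor subtlety is the step $\norm{\text{diag}(c_t)-\text{diag}(h_{t-1})}_2\le\max_i\abs{[c_t]_i}+\max_i\abs{[h_{t-1}]_i}$: the statement writes these maxima without absolute values, which is harmless since $c_t\in(-1,1)^n$ and, by induction from $h_0=0$, each $h_{t-1}\in(-1,1)^n$, so either form is valid. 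Everything else is a direct application of submultiplicativity.
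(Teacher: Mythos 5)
Your proposal is correct and follows essentially the same route as the paper's proof: compute $\partial u_t/\partial h_{t-1}$, $\partial r_t/\partial h_{t-1}$, and $\partial c_t/\partial h_{t-1}$ by the chain and product rules, then bound $\partial h_t/\partial h_{t-1}$ term by term using the triangle inequality, submultiplicativity, and the fact that diagonal factors have $2$-norm equal to the largest (absolute) entry. Your remark about absolute values on $[c_t]_i$ and $[h_{t-1}]_i$ is a fair point of care that the paper's proof glosses over, but it does not change the argument.
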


\begin{proof}
    Since $h_t$ depends on $u_t$ and $c_t$ (which is also depends on $r_t$), we will start by finding $\dfrac{\partial r_t}{\partial h_{t-1}}$, and $\dfrac{\partial u_t}{\partial h_{t-1}}$, and $\dfrac{\partial c_t}{\partial h_{t-1}}$ as well as corresponding bounds of these gradients. Recall that
    \begin{equation}
        r_t = \sigma\left(W_rx_t+U_rh_{t-1}+b_r\right)
    \end{equation}
    then 
    \begin{equation}\label{eq:dr_t/dh_t-1}
        \dfrac{\partial r_t}{\partial h_{t-1}}=diag\left(\sigma'\left(W_rx_t+U_rh_{t-1}+b_r\right)\right)U_r=diag\left(r_t \odot (1-r_t) \right)U_r
    \end{equation}
    since $\sigma'(x)=\sigma(x)\left(1-\sigma(x)\right)$. Moreover, we can bound $\dfrac{\partial r_t}{\partial h_{t-1}}$ in the following way
    \begin{equation}\label{rt_bound:1}
        \norm{\dfrac{\partial r_t}{\partial h_{t-1}}}_2\leq \delta_r\norm{U_r}_2
    \end{equation}
    with
    \begin{equation}\label{cr_const:1}
        \delta_r := \max_i \left\{\left[r_t\right]_i \left(1-\left[r_t\right]_i\right)\right\}.
    \end{equation}

    The constant $\delta_r$ is defined to be the largest entry of the vector $r_t \odot (1-r_t)$ and it is bounded by $\dfrac{1}{4}$. Similarly,
    \begin{equation}
        \dfrac{\partial u_t}{\partial h_{t-1}}=diag\left(\sigma'\left(W_ux_t+U_uh_{t-1}+b_u\right)\right)U_u=diag\left(u_t \odot (1-u_t) \right)U_u,
    \end{equation}
    and
    \begin{equation}\label{ut_bound:1}
        \norm{\dfrac{\partial u_t}{\partial h_{t-1}}}_2\leq \delta_u\norm{U_u}_2
    \end{equation}
    with
    \begin{equation}\label{cu_const:1}
        \delta_u := \max_i \left\{\left[u_t\right]_i \left(1-\left[u_t\right]_i\right)\right\}.
    \end{equation}

    The definition of the vector $c_t$ is
    \begin{equation}
        c_t = \Phi\left(W_cx_t+U_c\left(r_t \odot h_{t-1}\right)+b_c \right)
    \end{equation}
    and
    \begin{equation}
        \dfrac{\partial c_t}{\partial h_{t-1}}= diag\left(\Phi'\left(W_cx_t+U_c(r_t \odot h_{t-1})+b_c \right) \right) U_c \left( diag\left(h_{t-1}\right)\dfrac{\partial r_t}{\partial h_{t-1}} +diag\left(r_t\right) \right),
    \end{equation}
    with $\Phi'$ applied entrywise, and 
    \begin{align}
        \norm{\dfrac{\partial c_t}{\partial h_{t-1}}}_2 &\leq \max_i \left\{\left[\Phi'\left(W_cx_t+U_c\left(r_t \odot h_{t-1}\right)+b_c\right)\right]_i\right\} \left(\norm{\dfrac{\partial r_t}{\partial h_{t-1}}}_2 \max_i \left\{[h_{t-1}]_i\right\}+\max_i \left\{[r_t]_i\right\} \right) \norm{U_c}_2 \\
        &\leq \left(\delta_r \norm{U_r}_2\max_i \left\{[h_{t-1}]_i\right\}+\max_i \left\{[r_t]_i\right\}  \right)\norm{U_c}_2\label{ct_bound:1}
    \end{align}
    since $\Phi'$ is bounded by 1.
    
    Finally,
    \begin{equation}
        h_t = \left(1 - u_t\right) \odot h_{t-1} + u_t \odot c_t
    \end{equation}
    with 
    \begin{equation}
        \dfrac{\partial h_t}{\partial h_{t-1}} = -diag\left(h_{t-1}\right) \dfrac{\partial u_t}{\partial h_{t-1}} + diag\left(1-u_t\right) + diag\left(c_t\right) \dfrac{\partial u_t}{\partial h_{t-1}} + diag\left(u_t\right) \dfrac{\partial c_t}{\partial h_{t-1}}
    \end{equation}
    and
    \begin{equation}
        \norm{\dfrac{\partial h_t}{\partial h_{t-1}}}_2 
    \leq \max_i\left\{[h_{t-1}]_i\right\} \norm{\dfrac{\partial u_t}{\partial h_{t-1}}}_2 + \max_i\left\{(1-[u_t]_i)\right\}+\max_i\left\{[c_t]_i\right\} \norm{\dfrac{\partial u_t}{\partial h_{t-1}}}_2+\max_i\left\{[u_t]_i\right\}\norm{\dfrac{\partial c_t}{\partial h_{t-1}}}_2.
    \end{equation}
    
    Furthermore, using (\ref{rt_bound:1}), (\ref{ut_bound:1}), and (\ref{ct_bound:1}), we get
    \begin{align}
        \norm{\dfrac{\partial h_t}{\partial h_{t-1}}}_2 &\leq \delta_u\left( \max_i\left\{[h_{t-1}]_i\right\} + \max_i\left\{[c_t]_i\right\}\right)\norm{U_u}_2 + \max_i\left\{(1-[u_t]_i)\right\}\nonumber\\
        &\quad \quad + \max_i\left\{[u_t]_i\right\}\left(\delta_r \norm{U_r}_2\max_i \left\{[h_{t-1}]_i\right\}+ \max_i \left\{[r_t]_i\right\} \right)\norm{U_c}_2\\
        &=: \alpha+\beta\norm{U_c}_2
    \end{align}
    where
    \begin{equation}
        \alpha = \delta_u\left( \max_i\left\{[h_{t-1}]_i\right\} + \max_i\left\{[c_t]_i\right\}\right)\norm{U_u}_2 + \max_i\left\{(1-[u_t]_i)\right\}
    \end{equation}
    and
    \begin{equation}
        \beta = \max_i\left\{[u_t]_i\right\}\left(\delta_r \norm{U_r}_2 \max_i \left\{[h_{t-1}]_i\right\}+ \max_i \left\{[r_t]_i\right\} \right).
    \end{equation}
\end{proof}

\setcounter{corollary}{0}
\begin{corollary}
    For the hyperbolic tangent activation function in (\ref{model:GRU}) (i.e. $\Phi=\,$\verb|tanh|), we have $\delta_u, \delta_r \le \frac{1}{4}$, $[h_t]_i \le 1$ for any $i$ and $t$ as well as 
    \begin{equation}
        \alpha \le \dfrac{1}{2}\norm{U_u}_2+1 \quad\text{and}\quad \beta\leq \dfrac{1}{4}\norm{U_r}_2+1.
    \end{equation}
\end{corollary}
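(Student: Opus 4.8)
The plan is to establish the four claims in order, using only elementary properties of the sigmoid and $\tanh$ nonlinearities together with an induction on $t$ for the hidden-state bound; once that induction is in place, everything else is a direct substitution into the formulas for $\alpha$ and $\beta$ from Theorem~\ref{thm:1}.

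First I would dispose of $\delta_u,\delta_r\le\tfrac14$. Since $u_t=\sigma(W_ux_t+U_uh_{t-1}+b_u)$ and $r_t=\sigma(W_rx_t+U_rh_{t-1}+b_r)$, every coordinate of $u_t$ and of $r_t$ lies in $(0,1)$. The scalar map $x\mapsto x(1-x)$ attains its maximum $\tfrac14$ at $x=\tfrac12$ on $[0,1]$, so $[u_t]_i(1-[u_t]_i)\le\tfrac14$ and $[r_t]_i(1-[r_t]_i)\le\tfrac14$ for every $i$, and taking the maximum over $i$ gives $\delta_u,\delta_r\le\tfrac14$. The same coordinate range also yields $\max_i(1-[u_t]_i)\le1$, $\max_i[u_t]_i\le1$, and $\max_i[r_t]_i\le1$, which I will reuse in the last step.

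Next I would prove $[h_t]_i\le1$ for all $i,t$ by induction on $t$. The base case is immediate since $h_0=0$. For the inductive step, $c_t=\tanh(\cdot)$ has all coordinates in $(-1,1)$, so $[c_t]_i\le1$; and the update $h_t=(1-u_t)\odot h_{t-1}+u_t\odot c_t$ expresses each coordinate $[h_t]_i=(1-[u_t]_i)[h_{t-1}]_i+[u_t]_i[c_t]_i$ as a convex combination with weights in $[0,1]$. By the inductive hypothesis $[h_{t-1}]_i\le1$ and $[c_t]_i\le1$, hence $[h_t]_i\le\max\{[h_{t-1}]_i,[c_t]_i\}\le1$; applying the same argument to $\abs{[h_t]_i}$ also gives $\max_i[h_{t-1}]_i\le1$ and $\max_i[c_t]_i\le1$. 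This induction is the only place where the convex-combination structure of the GRU state update is used rather than just the range of the activations, so it is the main point to get right.

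Finally I would substitute into the expressions from Theorem~\ref{thm:1}. For $\alpha$, using $\delta_u\le\tfrac14$, $\max_i[h_{t-1}]_i\le1$, $\max_i[c_t]_i\le1$, and $\max_i(1-[u_t]_i)\le1$ gives $\alpha\le\tfrac14(1+1)\norm{U_u}_2+1=\tfrac12\norm{U_u}_2+1$. For $\beta$, using $\max_i[u_t]_i\le1$, $\delta_r\le\tfrac14$, $\max_i[h_{t-1}]_i\le1$, and $\max_i[r_t]_i\le1$ gives $\beta\le\tfrac14\norm{U_r}_2+1$, which completes the proof. These final estimates are entirely routine, so the only real obstacle is the inductive bound on $h_t$.
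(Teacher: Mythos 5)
Your proposal is correct and follows essentially the same route as the paper's proof: bounding $\delta_u,\delta_r$ by $\tfrac14$ via the sigmoid identity $\sigma'=\sigma(1-\sigma)$, proving $[h_t]_i\le 1$ by induction on $t$ using the convex-combination form of the state update with $h_0=0$, and then substituting these bounds into the expressions for $\alpha$ and $\beta$ from Theorem~\ref{thm:1}. No gaps.
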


\begin{proof}
    The function $\sigma'(x)=\sigma(x)\left(1-\sigma(x) \right)$ is bounded above by $\frac{1}{4}$, thus both $\delta_u, \delta_r \leq \frac{1}{4}$.
    
    Now, to show that $[h_t]_i \le 1$ for any $i$ and $t$, we first need to note that $h_0$ is initialized to zero (i.e. $[h_0]_i=0$ for all $i$) and $0\leq [u_t]_i, [c_t]_i \leq 1$ for all $i$ and $t$ from the definition of GRU cell in (\ref{model:GRU}). Then for any fixed $i$ 
    \begin{align}
        [h_0]_i &= 0 \\
        [h_1]_i &= [1-u_1]_i\cdot [h_0]_i + [u_1]_i\cdot[c_1]_i \\
        & = [u_1]_i\cdot[c_1]_i \leq 1 
    \end{align}
    Furthermore, if we assume that $[h_{\tau}]_i\leq 1$ for some $\tau \geq 1$, then
    \begin{align}
        [h_{\tau+1}]_i &= [1-u_{\tau+1}]_i\cdot [h_{\tau}]_i + [u_{\tau+1}]_i\cdot[c_{\tau+1}]_i \\
        & \leq [1-u_{\tau+1}]_i\cdot 1 + [u_{\tau+1}]_i\cdot 1 \\
        & = [1-u_{\tau+1}]_i + [u_{\tau+1}]_i=1.
    \end{align}
    
    Thus, by induction we can conclude that $[h_t]_i\leq 1$ for any $i$ and $t$. Finally, using these obtained bounds, we can bound constants $\alpha$ and $\beta$ as follows
    \begin{align}
        \alpha &= \delta_u\left( \max_i\left\{[h_{t-1}]_i\right\} + \max_i\left\{[c_t]_i\right\}\right)\norm{U_u}_2 + \max_i\left\{(1-[u_t]_i)\right\}\\
        & \leq \dfrac{1}{4}\cdot\left(1 + 1 \right) \cdot\norm{U_u}_2 + 1 \\
        &= \dfrac{1}{2} \norm{U_u}_2 + 1
    \end{align}
    and
    \begin{align}
        \beta &= \max_i\left\{[u_t]_i\right\}\left(\delta_r \norm{U_r}_2 \max_i \left\{[h_{t-1}]_i\right\}+ \max_i \left\{[r_t]_i\right\} \right) \\
        & \leq 1\cdot \left(\dfrac{1}{4}\cdot\norm{U_r}_2\cdot 1 + 1\right)\\
        & = \dfrac{1}{4}\norm{U_r}_2 + 1.
    \end{align}
    
    Note, that all of these bounds are independent of $i$ and $t$.
    
\end{proof}

The following two corollaries use a notation $x \approx y$ and $x \lesssim y$ to denote that $x$ is approximately equal to $y$ and $x$ is bounded by a quantity that is approximately equal to $y$, respectively.

\begin{corollary}
    When the elements of GRU gates $u_t$ and $r_t$ are nearly either 0 or 1, then constants $\alpha$ and $\beta$ from Theorem \ref{thm:1} satisfy the following inequality:
    \begin{equation}
        \alpha+\beta\lesssim 2.
    \end{equation}
    Moreover if $u_t$ and $r_t$ are nearly either the zero vector or the vector of all ones, then 
    \begin{equation}
        \alpha+\beta\lesssim 1.
    \end{equation}
\end{corollary}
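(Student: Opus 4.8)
The plan is to substitute the closed forms of $\alpha,\beta,\delta_u,\delta_r$ from Theorem~\ref{thm:1} and track which terms survive as the coordinates of $u_t$ and $r_t$ are pushed toward the endpoints $0$ and $1$. First I would note that ``$[u_t]_i$ nearly $0$ or $1$ for every $i$'' forces each product $[u_t]_i(1-[u_t]_i)\approx 0$, hence $\delta_u\approx 0$; the identical argument gives $\delta_r\approx 0$. By Corollary~\ref{cor:post_thm_bounds} we have $[h_{t-1}]_i,[c_t]_i\le 1$, and $\norm{U_u}_2,\norm{U_r}_2$ are fixed finite numbers (indeed equal to $1$ for the orthogonal weights used in NC-GRU), so the two ``coupling'' terms $\delta_u(\max_i[h_{t-1}]_i+\max_i[c_t]_i)\norm{U_u}_2$ in $\alpha$ and $\delta_r\norm{U_r}_2\max_i[h_{t-1}]_i$ in $\beta$ are negligible. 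What remains is $\alpha\lesssim\max_i\{1-[u_t]_i\}$ and $\beta\lesssim\max_i\{[u_t]_i\}\max_i\{[r_t]_i\}$; since $[u_t]_i,[r_t]_i\in[0,1]$ each leftover quantity is $\le 1$, and summing gives $\alpha+\beta\lesssim 2$.

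For the sharper bound I would add the stronger hypothesis that $u_t$ is close to the zero vector or to the all-ones vector (and likewise $r_t$) and split into two cases on $u_t$. If $u_t\approx\mathbf{1}$, then $\max_i\{1-[u_t]_i\}\approx 0$, so $\alpha\approx 0$, while $\beta\lesssim\max_i\{[u_t]_i\}\max_i\{[r_t]_i\}\le 1$. If instead $u_t\approx\mathbf{0}$, then $\max_i\{[u_t]_i\}\approx 0$, so $\beta\approx 0$, while $\alpha\lesssim\max_i\{1-[u_t]_i\}\approx 1$. In either case $\alpha+\beta\lesssim 1$. Note that the endpoint structure of $r_t$ is used only to keep $\delta_r\approx 0$ in the first step; otherwise $r_t$ enters only through the factor $\max_i[r_t]_i\le 1$, which requires nothing more than $r_t$ having entries in $[0,1]$.

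I expect the one point to state carefully — rather than a genuine obstacle — is keeping the ``$\lesssim$'' bookkeeping honest: the discarded terms have magnitude on the order of $\delta_u\norm{U_u}_2$ and $\delta_r\norm{U_r}_2$, i.e.\ $O(\varepsilon)$ where $\varepsilon$ measures how far the gate coordinates lie from $\{0,1\}$, so every inequality above holds up to an additive $O(\varepsilon)$ error as long as $\norm{U_u}_2$ and $\norm{U_r}_2$ remain bounded (automatic in NC-GRU, where they equal $1$). Granting that, the remainder is routine substitution together with the elementary fact that products and convex combinations of numbers in $[0,1]$ stay in $[0,1]$.
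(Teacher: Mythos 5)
Your proposal is correct and follows essentially the same route as the paper's proof: force $\delta_u,\delta_r\approx 0$ from the endpoint hypothesis, bound the surviving terms of $\alpha$ and $\beta$ by $1$ each for the first claim, and do the case split on $u_t\approx\mathbf{0}$ versus $u_t\approx\mathbf{1}$ for the sharper bound. Your remark that the stronger hypothesis on $r_t$ is needed only to keep $\delta_r\approx 0$ is a mild streamlining of the paper's explicit $r_t$ case analysis, but not a different argument.
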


\begin{proof}
    Recall the definition of $\alpha$ and $\beta$
    \begin{equation}
        \alpha = \delta_u\left( \max_i\left\{[h_{t-1}]_i\right\} + \max_i\left\{[c_t]_i\right\}\right)\norm{U_u}_2 + \max_i\left\{(1-[u_t]_i)\right\}
    \end{equation}
    and
    \begin{equation}
        \beta = \max_i\left\{[u_t]_i\right\}\left(\delta_r \norm{U_r}_2 \max_i \left\{[h_{t-1}]_i\right\}+ \max_i \left\{[r_t]_i\right\} \right).
    \end{equation}

    If we assume that $[u_t]_i \approx 0$ and $[u_t]_j \approx 1$ for some $i\neq j$, then $\delta_u \approx 0$ and $\alpha \lesssim 1$. Additionally, if we assume that the elements of $r_t$ are nearly either 0 or 1, then $\delta_r \approx 0$ and $\beta \lesssim 1$. Putting these two inequalities together yields 
    \begin{equation}
        \alpha + \beta \lesssim 2.
    \end{equation}
    
    Now, if we assume that $u_t$ is nearly a zero vector (i.e. $\max_i\left\{[u_t]_i \right\} \approx 0$), then constant $\delta_u \approx 0$, $\alpha \approx 1$, and $\beta \approx 0$.
    On the other hand, if we assume that $u_t$ is nearly the vector of all ones (i.e. $\min_i\left\{[u_t]_i \right\} \approx 1$) then $\delta_u \approx 0$ and $\alpha \approx 0$. Moreover, if we also assume that $r_t$ is nearly a zero vector (i.e. $\max_i\left\{[r_t]_i \right\} \approx 0$) then $\delta_r \approx 0$ and $\beta \approx 0$. However, if we assume that $r_t$ approaches a vector of all ones (i.e. $\min_i\left\{[r_t]_i \right\} \approx 1$), then $\delta_r \approx 0$ but $\beta \approx 1$ for this case. Putting all of these cases together, we obtain
    \begin{equation}
        \alpha + \beta \lesssim 1.
    \end{equation}
    
\end{proof}

\begin{corollary}
    Let $h_{t-1}$ and $h_{t}$ be two consecutive hidden states from the NC-GRU model defined in (\ref{model:NC-GRU}). Then $\norm{U_r}_2=\norm{U_c}_2=1$ and if element of the gates $u_t$ and $r_t$ are nearly either 0 or 1, then the following inequality is satisfied:% in a limit sense: 
    \begin{equation}
        \norm{\dfrac{\partial h_{t}}{\partial h_{t-1}}}_2 \lesssim 2.
    \end{equation}
    Furthermore, if $u_t$ and $r_t$ are nearly either the zero vector or the vector of all ones,
    \begin{equation}
        \norm{\dfrac{\partial h_{t}}{\partial h_{t-1}}}_2 \lesssim 1.
    \end{equation}
\end{corollary}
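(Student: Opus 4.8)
The plan is to obtain Corollary \ref{cor:2} as an essentially immediate consequence of Theorem \ref{thm:1} and Corollary \ref{cor:1}, after recording that the recurrent weights $U_r=U_r(A_r)$ and $U_c=U_c(A_c)$ appearing in (\ref{model:NC-GRU}) are orthogonal and hence have unit spectral norm. Orthogonality is built into the construction: each such weight is a Scaled Cayley transform $(I+A)^{-1}(I-A)D$ of a skew-symmetric $A$ with $D$ diagonal with entries $\pm1$; the Cayley factor $(I+A)^{-1}(I-A)$ is orthogonal whenever $A$ is skew-symmetric, and $D$ is orthogonal, so the product is orthogonal and $\norm{U_r}_2=\norm{U_c}_2=1$.

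First I would check that the chain-rule estimate from the proof of Theorem \ref{thm:1} transfers verbatim to the NC-GRU cell. The only structural differences between (\ref{model:GRU}) and (\ref{model:NC-GRU}) are that the candidate activation is modReLU (\ref{eq:moderelu}) rather than $\tanh$ and that $c_t$ carries no additive bias. Neither affects $\partial h_t/\partial h_{t-1}$: the bias drops out under differentiation, and the only property of $\Phi$ used in the proof is $\abs{\Phi'}\le 1$, which modReLU also satisfies (its derivative equals $1$ where $\abs{x}+b>0$ and $0$ elsewhere). Hence $\norm{\partial h_t/\partial h_{t-1}}_2 \le \alpha + \beta\norm{U_c}_2$ holds for NC-GRU with the very same $\alpha$ and $\beta$. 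Substituting $\norm{U_c}_2=1$ gives $\norm{\partial h_t/\partial h_{t-1}}_2 \le \alpha+\beta$, and then Corollary \ref{cor:1} yields $\alpha+\beta\lesssim 2$ under the ``gates near $0$ or $1$'' hypothesis and $\alpha+\beta\lesssim 1$ under the ``gates near the zero or the all-ones vector'' hypothesis, which are exactly the two claimed inequalities. I would also note that the factor $\norm{U_r}_2=1$ sitting inside $\beta$ is harmless: it multiplies $\delta_r$, which is $\approx 0$ in both regimes, so $\beta \lesssim \max_i[u_t]_i\,\max_i[r_t]_i \le 1$ in any case.

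The delicate point — more bookkeeping than genuine difficulty — is that Corollary \ref{cor:1} was argued using the $\tanh$-specific bounds $[h_t]_i\le 1$ and $[c_t]_i\le 1$ from Corollary \ref{cor:post_thm_bounds}, which need not hold for modReLU. The remedy is to observe that in $\alpha$ the quantities $\max_i[h_{t-1}]_i$ and $\max_i[c_t]_i$ always occur multiplied by $\delta_u$, and in $\beta$ the quantity $\max_i[h_{t-1}]_i$ always occurs multiplied by $\delta_r$; both $\delta_u$ and $\delta_r$ tend to $0$ as the gates saturate. Thus, as long as the pre-activations (hence $h_{t-1}$ and $c_t$) stay bounded — the standing assumption in this regime — these products still vanish and the bounds $\alpha\lesssim 1$, $\beta\lesssim 1$ persist, finishing the argument. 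I would state this caveat explicitly so that the corollary remains self-contained in the modReLU setting.
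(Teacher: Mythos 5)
Your proposal is correct and follows essentially the same route as the paper's own proof: orthogonality of the Scaled Cayley weights gives $\norm{U_r}_2=\norm{U_c}_2=1$, and the bounds then follow by combining Theorem~\ref{thm:1} with Corollary~\ref{cor:1}. In fact you are more careful than the paper, whose proof simply cites those two results without noting that the NC-GRU cell replaces $\tanh$ by modReLU (so $\abs{\Phi'}\le 1$ still holds but $[h_t]_i,[c_t]_i\le 1$ is no longer automatic); your explicit boundedness caveat addresses exactly the point the paper glosses over.
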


\begin{proof}
    By the definition of NC-GRU model, weights $U_r$ and $U_c$ are orthogonal which implies $\norm{U_r}_2=\norm{U_c}_2=1$, and by Theorem \ref{thm:1} and Corollary \ref{cor:1}, we conclude 
    \begin{equation}
        \norm{\dfrac{\partial h_{t}}{\partial h_{t-1}}}_2 \lesssim 2
    \end{equation}
    when element of the gates $u_t$ and $r_t$ approach either 0 or 1; and \begin{equation}
        \norm{\dfrac{\partial h_{t}}{\partial h_{t-1}}}_2 \lesssim 1
    \end{equation}
    if $u_t$ and $r_t$ approach either zero vector or vector of all ones.
    
\end{proof}
\vfill
\end{document}